\documentclass[lettersize,journal]{IEEEtran}
\usepackage{amsmath,amssymb,amsfonts}
\usepackage{algorithmic}
\usepackage{algorithm}
\usepackage{array}
\usepackage[caption=false,font=normalsize,labelfont=sf,textfont=sf]{subfig}
\usepackage{textcomp}
\usepackage{stfloats}
\usepackage{url}
\usepackage{verbatim}
\usepackage{graphicx}
\usepackage{cite}
\hyphenation{op-tical net-works semi-conduc-tor IEEE-Xplore}

\usepackage{lineno,hyperref}
\usepackage{amssymb}
\usepackage{amsthm}
\usepackage{makecell}
\usepackage{caption}
\usepackage[section]{placeins}
\usepackage{color}

\newtheorem{theorem}{Theorem}
\newtheorem{lemma}[theorem]{Lemma}
\newtheorem{proposition}{Proposition}
\newtheorem{definition}{Definition}

\begin{document}
\title{SSD: Towards Better Text-Image Consistency Metric in Text-to-Image Generation}

\author{Zhaorui Tan$^{1}$\thanks{$^{1}$ Zhaorui Tan, Xi Yang, Zihan Ye, Qiufeng Wang, and Yuyao Yan  are with Xi'an Jiaotong-Liverpool University.},  Xi Yang$^{1, \dag}$, Zihan Ye$^{1}$, Qiufeng Wang$^{1}$, Yuyao Yan$^{1}$, Anh Nguyen$^{2}$\thanks{$^{2}$ Anh Nguyen is with University of Liverpool.}, and Kaizhu Huang$^{3, \dag}$\thanks{$^{3}$ Kaizhu Huang is with Duke Kunshan University.\\ $\dag$: Corresponding authors: Xi Yang (Xi.Yang01@xjtlu.edu.cn) and Kaizhu Huang (kaizhu.huang@dukekunshan.edu.cn)}}



\maketitle

\begin{abstract}
Generating consistent and high-quality images from given texts is essential for visual-language understanding. Although impressive results have been achieved in generating high-quality images, text-image consistency is still a major concern in existing GAN-based methods. Particularly, the most popular metric $R$-precision may not accurately reflect the text-image consistency, often resulting in very misleading semantics in the generated images. Albeit its significance, how to design a better text-image consistency metric surprisingly remains under-explored in the community. 
In this paper, we make a further step forward to develop a novel CLIP-based metric termed as Semantic Similarity Distance ($SSD$), which is both theoretically founded from a distributional viewpoint and empirically verified on benchmark datasets. Benefiting from the proposed metric, we further design the Parallel Deep Fusion Generative Adversarial Networks (PDF-GAN) that aims at improving text-image consistency by fusing semantic information at different granularities and capturing accurate semantics.
Equipped with two novel plug-and-play components:  Hard-Negative Sentence Constructor and Semantic Projection, the proposed  PDF-GAN can mitigate inconsistent semantics and bridge the text-image semantic gap. A series of experiments show that,  as opposed to current state-of-the-art methods, our PDF-GAN can lead to significantly better text-image consistency while maintaining decent image quality on the CUB and COCO datasets.
\end{abstract}

\begin{IEEEkeywords}
Text-to-image, Image Generation, Generative Adversarial Networks
\end{IEEEkeywords}
\section{Introduction}


Generating images from text descriptions, usually known as Text-to-Image Generation (T2I), is a challenging task that requires both generating high-quality images and maintaining text-image consistency. Although Generative Adversarial Networks (GANs) based methods~\cite{yuan2019ckd, hong2018inferring, li2020exploring, gou2020segattngan, cheng2020rifegan, ramesh2021zero, tao2020df}  have achieved impressive results in generating high-quality images from text descriptions, they still struggle to keep the text-image consistency within complex semantics. Once the text descriptions become more complex, the semantics of the generated image will likely mismatch its text, although it comes with a high-quality score. Thus, the measurement of text-image consistency remains a critical concern in T2I generation.

Albeit its significance, how to design a better text-image consistency metric surprisingly remains under-explored in the community. To illustrate it clearly, we conduct a simple experiment for current potential text-image consistency metrics in Fig.~\ref{fig:banner}.\footnote{$SSD$, CS and CFID in this paper are scaled by $100$ for better readability. SOA is omitted since it cannot be applied to CUB.}

\begin{figure}[t]
\centering
\includegraphics[width=0.9\columnwidth]{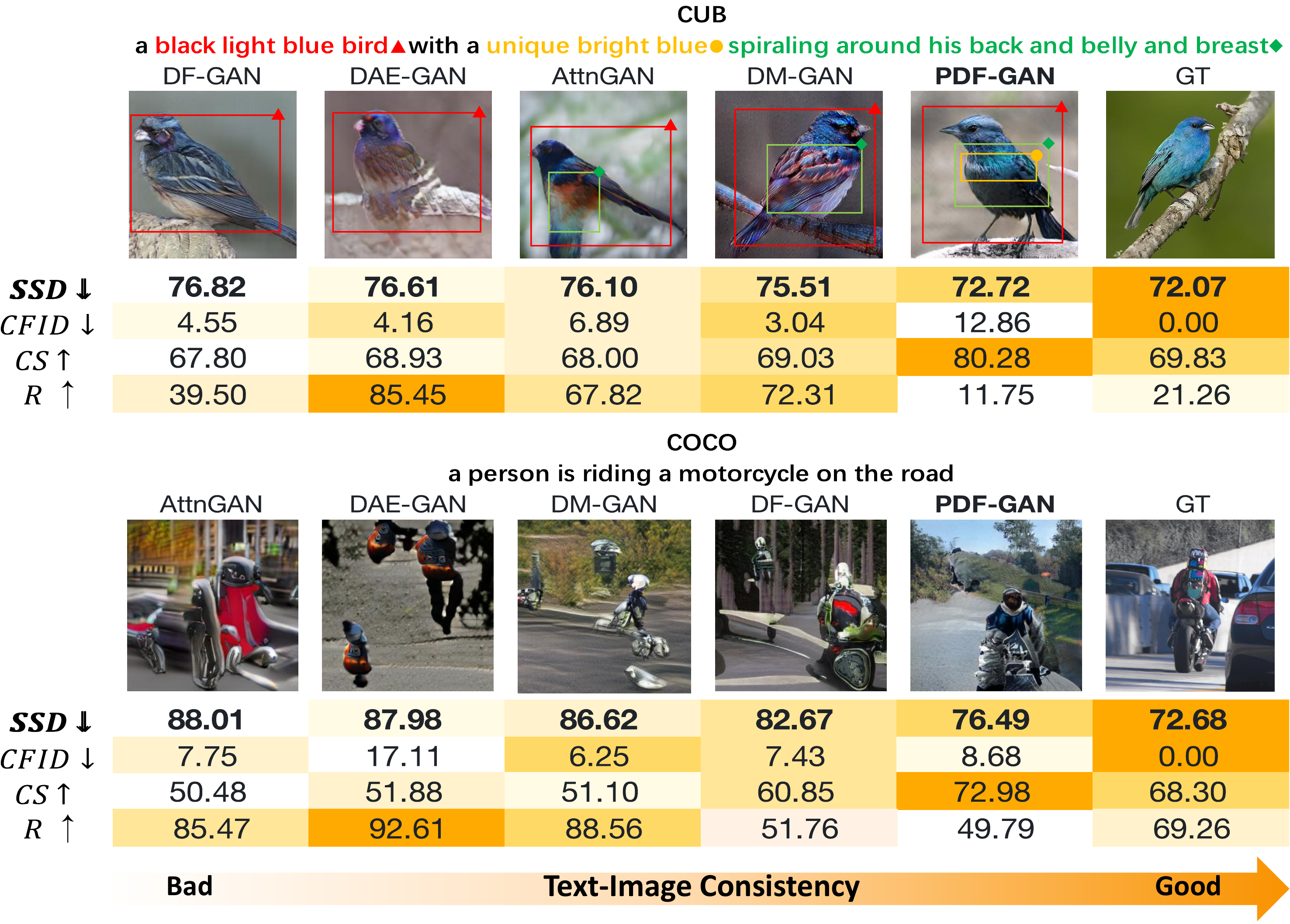}
\caption{Illustration of synthesized images with CFID, CS, $R$, and  $SSD$  score for Attn-GAN, DM-GAN, DAE-GAN, DF-GAN, and our PDF-GAN. Better metric scores are highlighted with higher saturation. $SSD$ demonstrates much better text-image consistency than $R$, CS and CFID.
}
\label{fig:banner}
\end{figure}

\begin{figure}[t]
\scriptsize
\begin{minipage}[t]{0.45\textwidth}
\centering
\includegraphics[width=\textwidth]{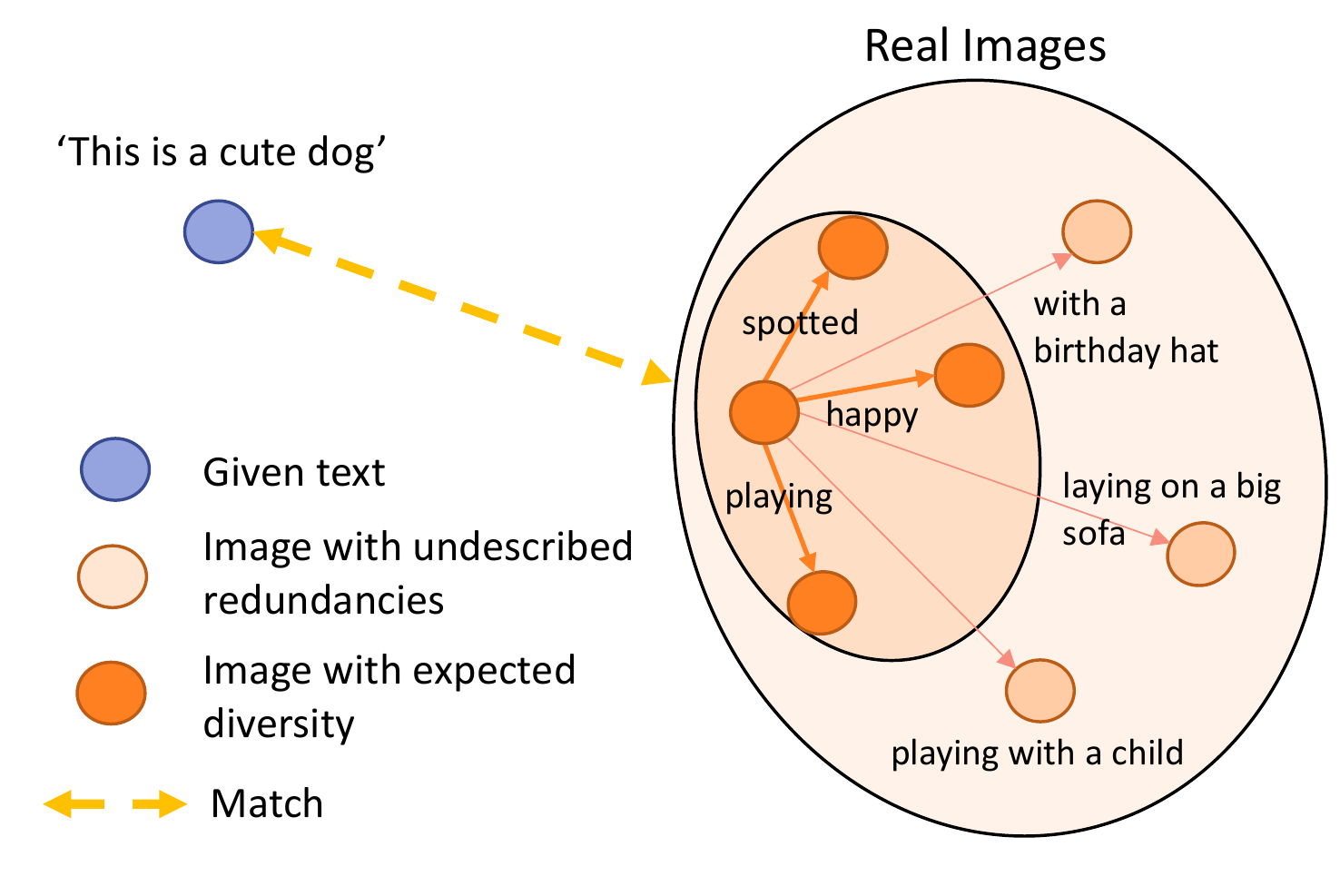}
\caption{Diagram of semantic redundancies. Real images often obtains semantics that may not have been described by given texts. This naturally causes semantic gap. 
}
\label{fig:semantic redundancies}
\end{minipage}
\hfill
\begin{minipage}[t]{0.45\textwidth}
\centering
\includegraphics[width=\textwidth]{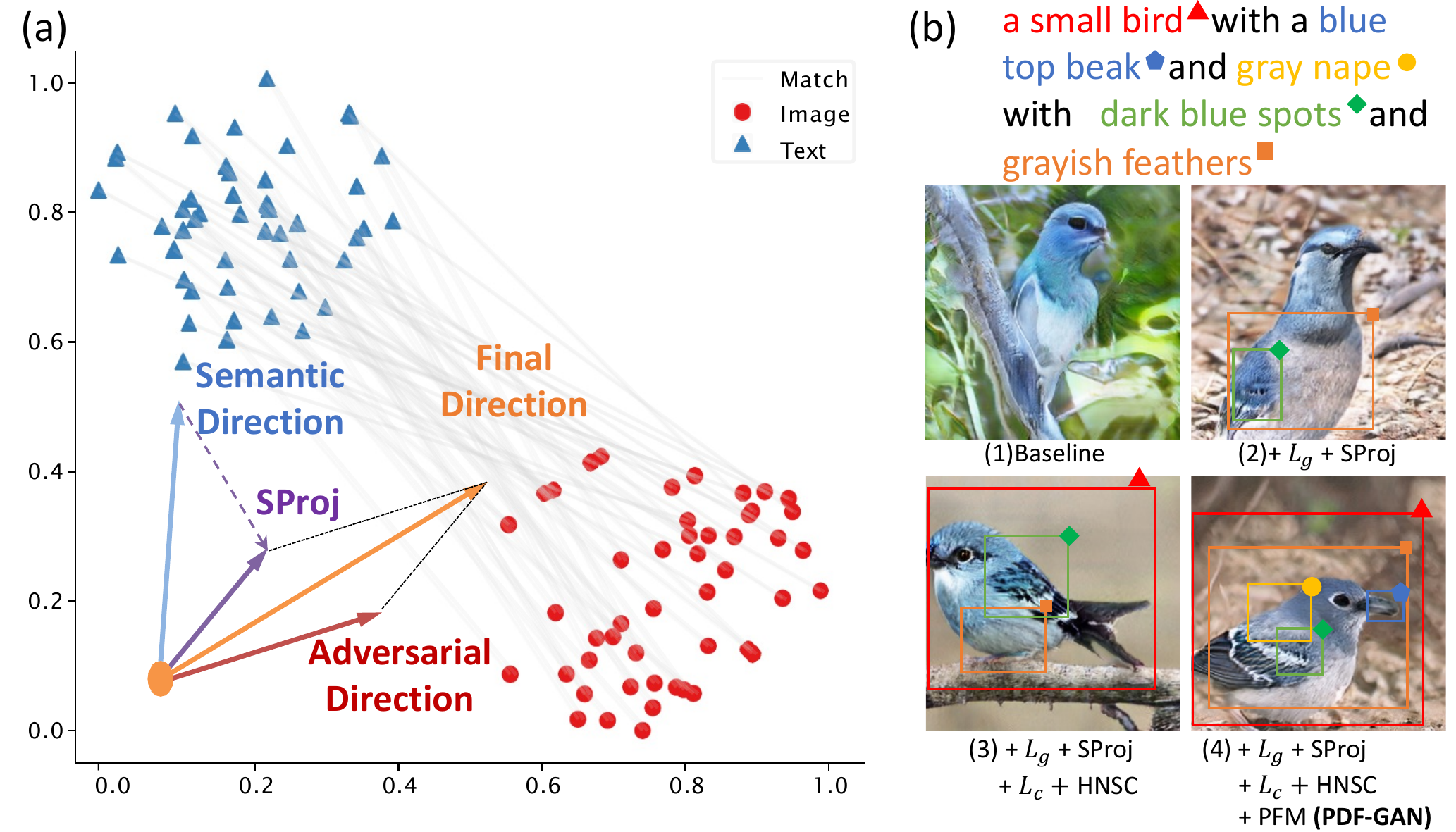}
\caption{
(a) T-SNE map of CLIP embeddings for randomly sampled GT text-image pairs in the CUB dataset. (b) Examples generated of different components are added. }
\label{fig:clipemb}
\end{minipage}
\end{figure}

The most widely used T2I synthesis metric,  $R$-precision ($R$)~\cite{xu2018attngan}, judges text-image consistency by evaluating if the generated image is more consistent with the given text than the other 99 randomly sampled texts. Such a measure may not accurately reflect the direct consistency between texts and images. The right-most column of Fig.~\ref{fig:banner} shows that the $R$ score on ground truth (GT) is even seriously worse than synthetic images.\footnote{We use the 99 random samples that can reproduce the $R$ in AttnGAN~\cite{xu2018attngan}  to calculate $R$ on GT pairs.}  It results in very misleading semantics in the generated image. As highlighted by rectangles in Fig.~\ref{fig:banner}, DAE-GAN achieves the highest $R$, but still produces very text-inconsistent images.  Meanwhile, random sampling may also be highly biased by datasets. The more significant variation between the sampled texts and the given descriptions leads to significantly better scores on more diverse datasets (i.e., COCO~\cite{lin2014microsoft} v.s. CUB~\cite{wah2011caltech} in Fig.~\ref{fig:banner} ($R$)).

Semantic Object Accuracy (SOA)~\cite{hinz2020semantic}, one recently-proposed metric specifically designed for evaluating multi-object text-image consistency,  would still fail to measure the entire semantic consistency without evaluating object attributes and relationships. More seriously, SOA cannot be applied to datasets where only one object usually appears in the generated images, such as CUB.

To alleviate these issues, researchers have to rely on Human Evaluation~\cite{tao2020df}. However, the process is usually costly, and its settings vary a lot among different methods, making it harder to apply in practical scenarios. 

In this work, we make a further step forward
and propose a novel CLIP-based text-image consistency metric from a distributional viewpoint, termed Semantic Similarity Distance ($SSD$). For T2I synthesis tasks, CLIP offers a joint language-vision embedding space where the similarity between semantic distributions of images and text can be directly measured. 
Our $SSD$ is designed by
combining two terms: 1) the first-moment term measures directly the text-image semantic similarity, reflecting the semantic bias between generated images and texts; 2) the second-moment term evaluates the difference of semantic variation between synthesized and real images conditioned on texts, suggesting that the diversity of semantics in the generated images should also be consistent with that of the real images. The second term can bring more credit to precise semantics, balancing the evaluation between overall and detailed consistency. Moreover, due to the large-scale pre-trained CLIP, $SSD$ alleviates the bias in datasets and can be compared across different datasets.

On the theoretical front, we show that $SSD$'s rationale is rooted in using a modified Wasserstein Distance for measuring the divergence of two distributions. We also show that it can be closely linked with two recent metrics, CLIPScore (CS)~\cite{hessel2021clipscore} and Conditional Frechet Inception Distanc (CFID) \cite{soloveitchik2021conditional}, but exhibits more desirable properties in measuring the semantic consistency. CS directly evaluates the  similarity between images and texts' CLIP~\cite{radford2021learning} embeddings, merely accounting for the first term in $SSD$. 
As shown in Fig.~\ref{fig:banner}, generated images may even achieve better CS than GT, partially showing its limitation as a metric.   
Meanwhile, we show that CFID inappropriately measures the distance between text-conditioned real and fake image distributions, which is seriously affected by semantic redundancies (i.e. the semantics not specified by given texts as shown in Fig.~\ref{fig:semantic redundancies}) in real images. Fig.~\ref{fig:banner} also illustrates its incapability of measuring text-image consistency. 

With the benefits of $SSD$, there are two findings as follows: 
1) Different levels of semantic information can significantly help with text-image consistency. However, the semantic gap~\cite{liang2022mind} will cause optimization conflicts between adversarial loss and semantic perceptual loss~\cite{xu2018attngan}, as shown in Fig.~\ref{fig:clipemb} (a). As such, brutally adding semantic perceptual loss weakens the semantic supervision, leading to a sub-optimal performance in text-image consistency.  
2) The mismatched samples for discrimination usually utilize shifted samples in a batch or random samples from other classes, which may lead to degradation of text-image consistency, especially in case of contrastive losses.

According to the above findings, we propose a novel one-stage T2I generation framework named PDF-GAN as Fig.~\ref{framework}, consisting of Parallel Fusion Modules (PFMs) with semantic perceptual losses to fuse different-level granularity textual data. To further improve text-image consistency, we design two novel plug-and-play modules: Hard-Negative Sentence Constructor (HNSC) and Semantic Projection (SProj). HNSC constructs stable and controllable hard negative textural samples instead of sampling mismatched textual samples from the dataset to alleviate dataset bias. SProj constrains the optimization direction of semantic loss, projecting it to the direction that does not conflict with the adversarial loss to overcome the semantic gap. As Fig.~\ref{fig:clipemb} (b) shows, our PDF-GAN with HNSC and SProj significantly improves text-image consistency while maintaining decent image quality. Our code is available at \url{https://github.com/zhaorui-tan/PDF-GAN/tree/main}.

Our contributions are summarized threefold:
\begin{itemize}
    \item We introduce a novel metric, Semantic Similarity Distance, which evaluates both text-image similarity and semantic variation difference between generated images and real images conditioned by the texts. $SSD$ is theoretically well founded and can be cross-compared on different datasets.
    \item We propose a novel framework, Parallel Deep Fusion Generative Adversarial Networks (PDF-GAN), with semantic perceptual losses and PFMs to fuse semantic information at different levels. 
    \item We design an HNSC that mines hard negative textual samples and SProj that alleviates the semantic gap and enhances text-image consistency. 
\end{itemize}

\section{Related Work}

\subsection{GANs for T2I} 
To improve the image quality and size in the first GAN-based approach~\cite{reed2016generative}, most methods adopt a multi-stage architecture for a coarse-to-fine generating process~\cite{zhang2017stackgan, zhang2018stackgan++, wang2021text, xu2018attngan, zhu2019dm}. 
The attention mechanism ~\cite{xu2018attngan, huang2019realistic, ruan2021dae, li2019controllable, gao2021lightweight}  and extra networks~\cite{ma2019sd, zhang2021cross, qiao2019mirrorgan, zhu2019dm, seshadri2021multi, dong2021unsupervised} 
are frequently applied to emphasize the semantics.  
DF-GAN~\cite{tao2020df} proposes a one-stage backbone with Deep Fusion Blocks using Affine and a one-way discriminator output with Matching-Aware Gradient Penalty. It avoids entanglements between generators without using the computationally expensive cross attention.

\subsection{Contrastive Language-Image Pre-training}
CLIP \cite{radford2021learning} is a large-scale multi-modal pre-training model that maps images and language to a joint latent space, aligning them by maximizing their Cosine similarity.
CLIP has been widely used as pre-trained encoders for T2I GAN-based models \cite{brock2018large, gal2021stylegan}, transformer-based generators \cite{wang2022clip}, and diffusion models \cite{ramesh2022hierarchical}.

\subsection{Text-Image Consistency Metrics}
Built upon  the Cosine similarity between image and text embeddings,  
the widely-used metric $R$~\cite{xu2018attngan}  evaluates if the generated image is more similar to the GT texts than random samples from the dataset. 
$R$ does not  measure directly the semantic consistency, which may be highly biased by the dataset. To this end, SOA~\cite{hinz2020semantic} uses a pre-trained object detection model to evaluate whether an  object mentioned by text exists in the generated image.
Failing to measure the entire semantic consistency,
SOA cannot be applied to datasets where only one object appears in the generated images (e.g. CUB).
Owing to CLIP's popularity, CS~\cite{hessel2021clipscore} is designed for image captioning, yet Cosine similarity of CLIP embeddings may not explicitly bind attributes to objects and neglects semantic variations~\cite{ramesh2022hierarchical}. 
With the conditional distribution, CFID~\cite{soloveitchik2021conditional}  evaluates the distance between text-conditioned fake and real image distributions. However, directly aligning fake and real distributions may mismatch the redundant parts in real images, i.e. those contents not specified by texts. This severely affects CFID's effect in measuring text-image consistency.

\section{Semantic Similarity Distance} 
In this section, we set out our novel metric for quantitatively evaluating T2I models. For better measuring the text-image consistency, our metric evaluates not only  direct text-image semantic similarity but also the semantic variation difference between synthesized and real images conditioned on texts.
From a distributional perspective, we assume that normalized embeddings of generated image $\tilde{e}_{f}$, real image $\tilde{e}_{r}$, and text $\tilde{e}_{s}$ distributions are all Gaussian-like distributions $\Phi$ in a joint language-vision embedding space (CLIP space): 
\begin{align}
    \mathbb{Q}_{f} = \Phi (m_f,  & \mathbb{C}_{ff})\;,\;
    \mathbb{Q}_{r} = \Phi (m_r, \mathbb{C}_{rr})\;,\; 
    \mathbb{Q}_{s} = \Phi (m_s, \mathbb{C}_{ss})\;, \;  \notag
\end{align}
where $m$ and $\mathbb{C}$ denote the mean and covariance; $f$, $r$ and $s$ mean the generated images, real images, and texts, respectively.
Conditioned on the same text $s$, the generated and real images's conditional distribution, $\mathbb{Q}_{f|s}, \mathbb{Q}_{r|s}$, are given as:
\begin{align}
    \mathbb{Q}_{f|s} = \Phi (m_{f|s}, \mathbb{C}_{ff|s})\;, \mathbb{Q}_{r|s} = \Phi (m_{r|s}, \mathbb{C}_{rr|s})\;, \nonumber
\end{align}
where $\mathbb{C}_{ff|s}$ and $\mathbb{C}_{rr|s}$ represent conditional covariances of $\tilde{e}_{f}$ and $\tilde{e}_{r}$ that are constant and independent of condition $\tilde{e}_s$.  
We are now ready to define our Semantic Similarity Distance.

\begin{definition} 
As the ultimate goal is to measure the semantic distance among $\tilde{e}_{f}$ and $\tilde{e}_s$, we consider the distance between ${Q}_{f}$, and ${Q}_{s}$, and the distance between ${Q}_{ff|s}$ and ${Q}_{rr|s}$. Our $SSD$ is then defined as follows:
\begin{align}
\small
\label{equ:ssd}
     SSD(\mathbb{Q}_{f},   \mathbb{Q}_{s},\mathbb{Q}_{f|s},\mathbb{Q}_{r|s}) &= 
    \underbrace {[1 - cos( m_f, m_s)]}_{SS}   +\\ 
      &\underbrace {||d(\mathbb{C}_{ff|s}) - d(\mathbb{C}_{rr|s})||^2}_{dSV}  \;,
\end{align}
where $d(\cdot)$ represents matrix's the diagonal part. $SS$ and $dSV$ represents the first and second term of $SSD$ respectively. 
\end{definition}

Since a pre-trained CLIP model is used to map the image and text to a joint language-vision embedding space, it is intuitive to measure their embeddings' Cosine distance, as done in the first-moment term of Eq.~(\ref{equ:ssd}). Due to the semantic gap between ${Q}_{f}$ and ${Q}_{s}$, solely measuring the Cosine distance cannot fully reflect the distribution divergence. 
We then take ${Q}_{f|s}$ and $Q_{r|s}$ into consideration to bridge the semantic gap.
We argue that if a model can fully capture semantics, its generated images should share the same semantic variation as the real images. Semantic variation can also help bind objects and attributes, leading to more precise semantic alignment. Note that we do not align ${Q}_{f|s}$ and $Q_{r|s}$ directly because it over-concerns the redundancies that are not described by the text. 
Therefore, we design a second-moment term in Eq.~(\ref{equ:ssd}) to evaluate the semantic variation by calculating the diagonal differences between text conditioned covariance of fake and real image distributions.

Subsequently, we will support the plausibility of $SSD$ by proving some lemmas.
\begin{lemma}
\label{lemma}
If $\mathbb{C}$ is a non-negative diagonal matrix, the second-moment term can be rewritten as:
\begin{align}
\small
\label{equ:term2}
 &||d(\mathbb{C}_{ff|s}) - d(\mathbb{C}_{rr|s})||^2
\propto\; Tr[(\mathbb{C}_{ff|s}^{\frac{1}{2}} - \mathbb{C}_{ff|s}^{\frac{1}{2}})^2] \;\\
\label{equ:term2_2}
&=\; Tr[\mathbb{C}_{ff|s} + \mathbb{C}_{rr|s} - 2(\mathbb{C}_{ff|s}^{\frac{1}{2}}\mathbb{C}_{rr|s}\mathbb{C}_{ff|s}^{\frac{1}{2}})^{\frac{1}{2}}] \;. 
\end{align}
\end{lemma}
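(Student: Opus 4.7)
The plan is to exploit the diagonality assumption aggressively, which reduces every quantity to coordinate-wise statements about the conditional variances. Writing $\mathbb{C}_{ff|s} = \mathrm{diag}(a_1,\ldots,a_n)$ and $\mathbb{C}_{rr|s} = \mathrm{diag}(b_1,\ldots,b_n)$ with $a_i,b_i \geq 0$, we have $d(\mathbb{C}_{ff|s}) = (a_1,\ldots,a_n)^\top$ and analogously for $\mathbb{C}_{rr|s}$; the principal square roots act entrywise as $\mathbb{C}_{ff|s}^{1/2} = \mathrm{diag}(\sqrt{a_i})$, and since both matrices are diagonal they commute, which is the structural fact that drives the whole argument.

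I would first dispatch the second equality, line~(\ref{equ:term2_2}), which is pure matrix algebra. Expanding the square gives $\mathrm{Tr}[(\mathbb{C}_{ff|s}^{1/2} - \mathbb{C}_{rr|s}^{1/2})^2] = \mathrm{Tr}[\mathbb{C}_{ff|s}] + \mathrm{Tr}[\mathbb{C}_{rr|s}] - 2\,\mathrm{Tr}[\mathbb{C}_{ff|s}^{1/2}\mathbb{C}_{rr|s}^{1/2}]$. Commutativity of the diagonal square roots yields $\mathbb{C}_{ff|s}^{1/2}\mathbb{C}_{rr|s}^{1/2} = (\mathbb{C}_{ff|s}^{1/2}\mathbb{C}_{rr|s}\mathbb{C}_{ff|s}^{1/2})^{1/2}$, and substituting recovers the Bures/Wasserstein form on the right. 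This has the convenient side effect of explaining, structurally, why the second-moment term coincides with the $W_2$ transport cost between two centered Gaussians with the given conditional covariances, reinforcing the paper's earlier claim that $SSD$ is grounded in a modified Wasserstein distance.

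I would then address the first proportionality, line~(\ref{equ:term2}), which is the only genuinely delicate step. Coordinate-wise the left-hand side equals $\sum_i (a_i - b_i)^2$ while the middle equals $\sum_i (\sqrt{a_i}-\sqrt{b_i})^2$, and these are not literally equal. The bridge is the elementary identity $(a_i - b_i)^2 = (\sqrt{a_i}-\sqrt{b_i})^2\,(\sqrt{a_i}+\sqrt{b_i})^2$. Because CLIP embeddings are unit-normalized, the diagonal variances are confined to a bounded interval, so the factor $(\sqrt{a_i}+\sqrt{b_i})^2$ is uniformly bounded above and, outside the degenerate case where both coordinates vanish, uniformly bounded below. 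The two sums are therefore equivalent as semimetrics on the space of diagonal covariances, which is the intended reading of ``$\propto$'' in this lemma.

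The main obstacle is precisely this last step: the symbol $\propto$ cannot be interpreted as strict proportionality with a single global constant, since the bridging factor varies coordinate-by-coordinate. The honest statement is an equivalence up to bounded multiplicative constants. I would therefore present the factorization identity explicitly, note the boundedness inherited from CLIP normalization, and argue that for the purposes of defining a variance-matching penalty the two expressions detect the same deviations and vanish on the same set, so they are interchangeable as the second-moment component of $SSD$.
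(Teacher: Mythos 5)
Your proof is correct and follows essentially the same route as the paper's: reduce everything to the diagonal entries, use commutativity of the diagonal square roots to collapse $(\mathbb{C}_{ff|s}^{1/2}\mathbb{C}_{rr|s}\mathbb{C}_{ff|s}^{1/2})^{1/2}$ to $\mathbb{C}_{ff|s}^{1/2}\mathbb{C}_{rr|s}^{1/2}$, and arrive at $\sum_i(\sqrt{a_i}-\sqrt{b_i})^2$ (you also silently fix the typo in the lemma statement, where the middle expression should read $\mathbb{C}_{ff|s}^{1/2}-\mathbb{C}_{rr|s}^{1/2}$ rather than a difference of the matrix with itself). The one place you go beyond the paper is the final step: the paper merely asserts $(\sqrt{a_i}-\sqrt{b_i})^2 \propto (a_i-b_i)^2$ entrywise with no justification, whereas your factorization $(a_i-b_i)^2=(\sqrt{a_i}-\sqrt{b_i})^2(\sqrt{a_i}+\sqrt{b_i})^2$ together with the boundedness of normalized CLIP coordinates makes explicit that this ``proportionality'' is really a two-sided equivalence with coordinate-dependent constants (degenerating where both variances vanish), which is a more honest account of what the lemma actually establishes.
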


\begin{proof}
According to~\cite{kay1993fundamentals}, conditional covariances can be equivalently written as follows:
\begin{align*}
\mathbb{C}_{ff|s} = \mathbb{C}_{ff} - \mathbb{C}_{fs}\mathbb{C}_{ss}^{-1}\mathbb{C}_{sf}\;,\;
\mathbb{C}_{rr|s} = \mathbb{C}_{rr} - \mathbb{C}_{rs}\mathbb{C}_{ss}^{-1}\mathbb{C}_{sr}\;. 
\end{align*}

$\mathbb{C}_{**}$ is defined as a covariance matrix, which is positive semi-definite.
Meanwhile, in CLIP space we only focus the diagonal part of  $\mathbb{C}$ because CLIP tries to maximize the Cosine similarity between embeddings via training. Thus $\mathbb{C}$ can be simplified as  a non-negative diagonal matrix.  
Therefore for Eq.~(\ref{equ:term2_2}), we have:
\begin{align*}
& Tr[\mathbb{C}_{ff|s} + \mathbb{C}_{rr|s} - 2(\mathbb{C}_{ff|s}^{\frac{1}{2}}\mathbb{C}_{rr|s}\mathbb{C}_{ff|s}^{\frac{1}{2}})^{\frac{1}{2}}] \notag \\
=  & Tr[d(\mathbb{C}_{ff|s}) + d(\mathbb{C}_{rr|s}) - 
2(d(\mathbb{C}_{ff|s})^{\frac{1}{2}}d(\mathbb{C}_{rr|s})d(\mathbb{C}_{ff|s})^{\frac{1}{2}})^{\frac{1}{2}}] 
\notag \;. 
\end{align*}
For the diagonal part, it has:
\begin{align}
&{\sigma_{ff|s}} + {\sigma_{rr|s}} - 2({\sigma_{ff|s}}^{\frac{1}{2}}\  {\sigma_{rr|s}}\ {\sigma_{ff|s}}^{\frac{1}{2}})^{\frac{1}{2}}
\notag \\
=& {\sigma_{ff|s}} + {\sigma_{rr|s}} - 2({\sigma_{ff|s}}\ {\sigma_{rr|s}})^{\frac{1}{2}} 
 \notag \\
=&  [ {\sigma_{ff|s}}^{\frac{1}{2}} - {\sigma_{rr|s}}^{\frac{1}{2}}]^2 \notag 
\propto  [{\sigma_{ff|s}} - {\sigma_{rr|s}} ]^2 \;, \notag
\end{align}
where ${\sigma_{**}}$ is the diagonal part of $C_{**}$. 
Therefore, for Eq.~(\ref{equ:term2_2}) we have:
\begin{align}
&Tr[\mathbb{C}_{ff|s} + \mathbb{C}_{rr|s} - 2(\mathbb{C}_{ff|s}^{\frac{1}{2}}  \mathbb{C}_{rr|s}   \mathbb{C}_{ff|s}^{\frac{1}{2}})^{\frac{1}{2}}]
\notag \\
= \; & Tr[(d(\mathbb{C}_{ff|s})^{\frac{1}{2}} - d(\mathbb{C}_{rr|s})^{\frac{1}{2}})^2] \; \notag
\propto\; ||d(\mathbb{C}_{ff|s}) - d(\mathbb{C}_{rr|s})||^2  \; . \notag
\end{align}
Thus, Eq.~(\ref{equ:term2}) is proportional to Eq.~(\ref{equ:term2_2}). 
\end{proof}

\begin{lemma}
\label{lemma:1}
When $ m_f, m_s $ are normalized, the first-moment term can be rewritten as:
\begin{align}
\small
\label{equ:term1}
 1 - cos(m_f, m_s)   \triangleq \; ||m_f - m_s||^2 \;.
\end{align}
\end{lemma}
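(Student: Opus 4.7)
The plan is to prove Lemma~\ref{lemma:1} by a direct expansion of the squared Euclidean norm and an appeal to the normalization hypothesis. Since both sides of the claimed relation are scalar quantities depending only on the inner product $\langle m_f, m_s\rangle$, a one-line algebraic manipulation should suffice; the main thing to be careful about is the precise meaning of $\triangleq$, which in context I read as ``equal up to a positive multiplicative constant,'' matching the proportionality-style conclusion used in the proof of Lemma~\ref{lemma}.

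The first step is to write
\[
\|m_f - m_s\|^2 \;=\; \|m_f\|^2 + \|m_s\|^2 - 2\langle m_f, m_s\rangle .
\]
The second step is to invoke the normalization hypothesis $\|m_f\| = \|m_s\| = 1$, coming from the fact that the CLIP means are taken on $\ell_2$-normalized embeddings (consistent with how cosine similarity is used throughout the paper and in CS). Under that hypothesis the inner product coincides with cosine similarity, $\langle m_f, m_s\rangle = \cos(m_f,m_s)$, so the previous identity reduces to
\[
\|m_f - m_s\|^2 \;=\; 2\bigl(1 - \cos(m_f, m_s)\bigr).
\]
The third step is to read off the conclusion: $1 - \cos(m_f,m_s)$ and $\|m_f-m_s\|^2$ differ only by the positive constant factor $\tfrac12$, hence are equal in the sense of $\triangleq$ (proportionality), which matches the convention already adopted in Lemma~\ref{lemma} where ``$\propto$'' was used after diagonalization.

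There is essentially no hard step here; the only subtlety worth flagging is the interpretation of the $\triangleq$ symbol and the justification that the CLIP-based means $m_f, m_s$ may indeed be taken as unit vectors (otherwise a factor $\|m_f\|\|m_s\|$ would appear in the denominator of the cosine and the identity would only hold up to that factor). I would therefore briefly note in the proof that $\tilde e_f, \tilde e_s$ are normalized by construction, so their first moments inherit unit norm after the CLIP $\ell_2$-normalization, which makes the reduction from cosine distance to squared Euclidean distance exact up to the constant $2$.
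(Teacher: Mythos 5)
Your proposal is correct and follows essentially the same route as the paper, which simply invokes the equivalence of cosine distance and squared Euclidean distance for normalized vectors; you merely spell out the algebra $\|m_f-m_s\|^2 = 2(1-\cos(m_f,m_s))$ and correctly flag that the identity holds only up to the constant factor $2$, which is consistent with reading $\triangleq$ as proportionality. The one caveat worth keeping in mind (shared by the paper's own proof) is that the mean of $\ell_2$-normalized embeddings is not itself a unit vector in general, so the unit-norm hypothesis on $m_f, m_s$ is an idealization rather than something that follows automatically from normalizing $\tilde e_f, \tilde e_s$.
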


\begin{proof}
Cosine distance is equivalent to Euclidean distance of normalized vectors. In CLIP space, $m_f, m_s$ are normalized embeddings of generated image $\tilde{e}_{f}$ and text $\tilde{e}_{s}$.
\end{proof}

We now show how our $SSD$ can be theoretically linked with the other metrics including CS and CFID in T2I synthesis, but exhibits more desirable characteristics.

\begin{proposition} 
If we only use the first term in Eq.~(\ref{equ:ssd}), our $SSD$ is  converted to CLIP-Score (CS).
\end{proposition}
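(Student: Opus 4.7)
The plan is to unfold the definitions of both quantities and check that the first-moment term $SS$ of $SSD$ reduces to CLIPScore (CS) up to an affine rescaling. Since CS is monotone in the cosine similarity of paired CLIP embeddings, any affine bijection preserves its ranking behavior as an evaluation metric, which is enough to justify the claim that SSD's first term ``converts to'' CS.

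First, I would recall the definition from Hessel et al., $CS(e_f, e_s) = w \cdot \max(\cos(e_f, e_s), 0)$, with a fixed rescaling constant $w$ (typically $w = 2.5$). Next, I would identify $m_f$ and $m_s$ in the distributional formulation as the (already unit-normalized) mean CLIP embeddings of the generated image distribution $\mathbb{Q}_f$ and the text distribution $\mathbb{Q}_s$, so that they live exactly in the joint language-vision space on which CS computes its cosine.

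The core step is to substitute these identifications into the first term of $SSD$: by Lemma~\ref{lemma:1} and the definition, $SS = 1 - \cos(m_f, m_s)$. This is precisely one minus the cosine similarity on the same pair of vectors on which CS operates, so $SS = 1 - CS/w$ once we drop the non-negativity clip (which is inactive for any semantically meaningful pair). Hence $SS$ and $CS$ are affinely equivalent, which is the statement of the proposition.

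The main obstacle I anticipate is the subtle mismatch between the cosine of means $\cos(m_f, m_s)$ used in $SS$ and the empirical average of per-pair cosines $\mathbb{E}[\cos(\tilde{e}_f, \tilde{e}_s)]$ that a typical implementation of CS reports. I would handle this either by interpreting $m_f, m_s$ as the per-sample CLIP embeddings in the sample-wise formulation, in which case the equivalence holds pair-by-pair and therefore after averaging, or by noting that on unit-normalized CLIP vectors the two quantities agree to first order and that any higher-order discrepancy is exactly what the $dSV$ term is designed to absorb, and is therefore correctly discarded when one retains only the first-moment term of $SSD$.
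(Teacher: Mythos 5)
Your proposal is correct and follows essentially the same route as the paper: unfold the definition $CS = \omega\,\mathbb{E}[\max(\cos(\tilde e_f,\tilde e_s),0)]$, identify it with the cosine term in $SS = 1-\cos(m_f,m_s)$ up to the scale constant and the (inactive) non-negativity clip, and conclude the two are affinely equivalent. You are in fact slightly more careful than the paper, which silently writes $\cos(m_f,m_s)=\mathbb{E}[\cos(\tilde e_f,\tilde e_s)]$ without addressing the cosine-of-means versus mean-of-cosines distinction that you explicitly flag and resolve.
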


\begin{proof}
The Cosine similarity term $cos(m_f, m_s)$ in Eq.~(\ref{equ:term1}) of $SS$ is equivalent to CS:
\begin{align} 
\small
    cos(m_f, m_s)  &= \mathbb{E}\;[\;cos(\tilde{e}_f,\tilde{e}_s)\;] 
      \propto \omega 
       \notag  \\ &* \mathbb{E}\;[\; \max(\;cos(\;\tilde{e}_f,\tilde{e}_s\;), 0) \;]  = \mbox{CS}(\;\tilde{e}_f,\tilde{e}_s\;), \notag
\end{align}
where $\omega$ is a constant scale-coefficient used in CS.
\end{proof}

\begin{proposition} 
If we measure the distance between $m_{f|s}$ and $m_{r|s}$ for the first term, $SSD$ will be proportional to Conditional Frechet Inception Distance (CFID).
\end{proposition}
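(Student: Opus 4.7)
The plan is to reduce the modified $SSD$ to the textbook form of CFID by invoking the two lemmas already proved. First I would make the substitution indicated by the proposition's hypothesis: replace the Cosine distance $1 - \cos(m_f, m_s)$ in the first-moment term of Eq.~(\ref{equ:ssd}) with $1 - \cos(m_{f|s}, m_{r|s})$. Since $m_{f|s}$ and $m_{r|s}$ are conditional means of CLIP embeddings and hence normalized (or can be treated as such, given CLIP's training objective), Lemma~\ref{lemma:1} applies verbatim and converts this quantity into $\|m_{f|s} - m_{r|s}\|^2$, which is exactly the first-moment term of the standard CFID.

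Next I would turn to the second-moment term $\|d(\mathbb{C}_{ff|s}) - d(\mathbb{C}_{rr|s})\|^2$ and apply Lemma~\ref{lemma} directly, which establishes that this quantity is proportional to $Tr[\mathbb{C}_{ff|s} + \mathbb{C}_{rr|s} - 2(\mathbb{C}_{ff|s}^{1/2}\mathbb{C}_{rr|s}\mathbb{C}_{ff|s}^{1/2})^{1/2}]$ under the diagonal-covariance assumption valid in CLIP space. This is precisely the trace term appearing in CFID as defined in \cite{soloveitchik2021conditional}.

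Combining the two steps, the modified expression equals
\begin{align*}
\|m_{f|s} - m_{r|s}\|^2 + c \cdot Tr\bigl[\mathbb{C}_{ff|s} + \mathbb{C}_{rr|s} - 2(\mathbb{C}_{ff|s}^{1/2}\mathbb{C}_{rr|s}\mathbb{C}_{ff|s}^{1/2})^{1/2}\bigr]
\end{align*}
for some positive constant $c$ inherited from Lemma~\ref{lemma}. Recognizing the right-hand side as the standard Frechet-type distance applied to the text-conditioned image distributions $\mathbb{Q}_{f|s}$ and $\mathbb{Q}_{r|s}$, i.e.\ CFID, yields the claimed proportionality.

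I expect the only delicate point to be bookkeeping the proportionality constant: Lemma~\ref{lemma} introduces a scalar factor between $\|d(\mathbb{C}_{ff|s}) - d(\mathbb{C}_{rr|s})\|^2$ and the CFID trace term, whereas the first-moment term transfers with constant $1$ via Lemma~\ref{lemma:1}. I would therefore state explicitly that the equivalence is up to a global positive scaling, which matches the wording ``proportional to CFID'' in the proposition, and note that no extra conditions on $\mathbb{C}_{**}$ beyond those already imposed in Lemma~\ref{lemma} are needed.
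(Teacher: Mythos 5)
Your proposal is correct and follows essentially the same route as the paper: apply Lemma~\ref{lemma:1} (after substituting $m_{f|s}, m_{r|s}$) to obtain the squared Euclidean first-moment term, apply Lemma~\ref{lemma} to convert the diagonal second-moment term into the Frechet trace term, and identify the sum as $CFID(\mathbb{Q}_{f|s}, \mathbb{Q}_{r|s})$. Your explicit attention to the proportionality constant and to whether the conditional means are actually normalized is slightly more careful than the paper's own proof, which performs the substitution directly into the Euclidean form without comment.
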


\begin{proof}
By Lemma~\ref{lemma} and \ref{lemma:1}, $SSD$  can be rewritten as:
\begin{align} 
\small
\label{equ:ssd_re}
   SSD \propto & \; ||m_f - m_s||^2 
   \notag \\ &+ Tr[\mathbb{C}_{ff|s} + \mathbb{C}_{rr|s} 
    - 2(\mathbb{C}_{ff|s}^{\frac{1}{2}}\mathbb{C}_{rr|s}\mathbb{C}_{ff|s}^{\frac{1}{2}})^{\frac{1}{2}}] \;.  
\end{align}
If we use $m_{f|s}$ and $m_{r|s}$ for the first term, we have:
\begin{align}
\small
\label{equ:ssd_cfid}
    & ||m_{f|s} - m_{r|s}||^2  + \notag \\ 
    & \underbrace {Tr[\mathbb{C}_{ff|s} + \mathbb{C}_{rr|s} - 
      2(\mathbb{C}_{ff|s}^{\frac{1}{2}}\mathbb{C}_{rr|s}\mathbb{C}_{ff|s}^{\frac{1}{2}})^{\frac{1}{2}}}_{TrSV}]\;  \\
     = \; & CFID (Q_{ff|s}, Q_{rr|s})\;.
\end{align}
Then Eq.~(\ref{equ:ssd}) can be changed to CFID. 
\end{proof}

\subsection{Text-based Semantic Similarity Distance}

Since SSD measures the semantic distance between texts and generated images as well as the text-conditioned variation difference between synthesized images and real image semantic distributions, it can also be applied to image captioning tasks subject to  slight modification.  
\begin{definition} 
$SSD$ in Eq.(\ref{equ:ssd}) based on images can be easily altered as $SSD_T$ which is based on texts and can be applied for the image captioning (image-to-text) task:
\begin{align}
\label{equ:ssd_rf}
    SSD_T  = &SSD(\mathbb{Q}_{fs}, \mathbb{Q}_{r}, \mathbb{Q}_{fs|r}, \mathbb{Q}_{s|r}) \\
     = &[1 - cos( m_{fs}, m_r)]   +  || d(\mathbb{C}_{fsfs|r}) -  d(\mathbb{C}_{ss|r})||^2 \;, 
\end{align}
where $fs$ represents fake captions.
\end{definition}
We have conducted a series of experiments as shown in Table~\ref{tab:ssdontext2} on CUB dataset to show that our $SSD_T$ also reflects image-text consistency for image captioning.

\subsection{SSD's Significance}

As shown in Fig.~\ref{fig:ssd_explain}, CS only evaluate the first-moment distance between $m_s$ and $m_f$, which is not enough because the generated image should maintain the shape of $Q_{f|s}$ which is a partial $Q_{r|s}$. CFID only evaluates the first-moment and second-moment distance between $Q_{f|s}$ and $Q_{r|s}$ whose $m_{r|s}$ cannot get rid of undesribed redundancies. Our $SSD$ measures the the first-moment distance between $m_s$ and $m_f$, and the distortion of $Q_{f|s}$ by comparing its second-moment distance with $Q_{r|s}$. 

Our new proposed $SSD$ can be comprehended as evaluating direct consistency between text and images as a first-moment bias term and semantic variation difference between fake images and real images conditioned by text as a second-moment variation term. The first term of SSD is an inversed scaled CS and the second term is a minor correction of the first term that reflects the conditioned semantic distribution distortion of synthesized images, which is similar to the second term of CFID.

In contrast, CS ignores the second-moment variation term, thus causing weakness in estimating semantic variations.
It  neglects that even if two distributions are very close, the semantics distribution in each modality can be distorted. 
CFID inappropriately considers the first-moment bias term as the difference between fake and real image distributions, which may mismatch the redundancy contents in the images. 
Eq.~(1) focuses on measuring primary semantic changes, bringing more consistent attention to the major semantics variation than Eq.~(\ref{equ:ssd_cfid}). Semantic gap hinders CFID measuring between real and generated image semantic distributions. For example, the text `a cute dog' with a real image presents `a cut dog with a birthday hat', and the generated image presents `a running cute dog'. The generated image does not match the real image but matches the text. 
Our experiments also empirically verify the drawbacks of CS, CFID in Table~\ref{tab:semantic res}, and Table~\ref{tab:ssdontext2} where generated and modified results exceed GT on CS and CFID fails on text-image consistency.  

\begin{figure*}[t]
\centering
\scriptsize
\includegraphics[width=\linewidth]{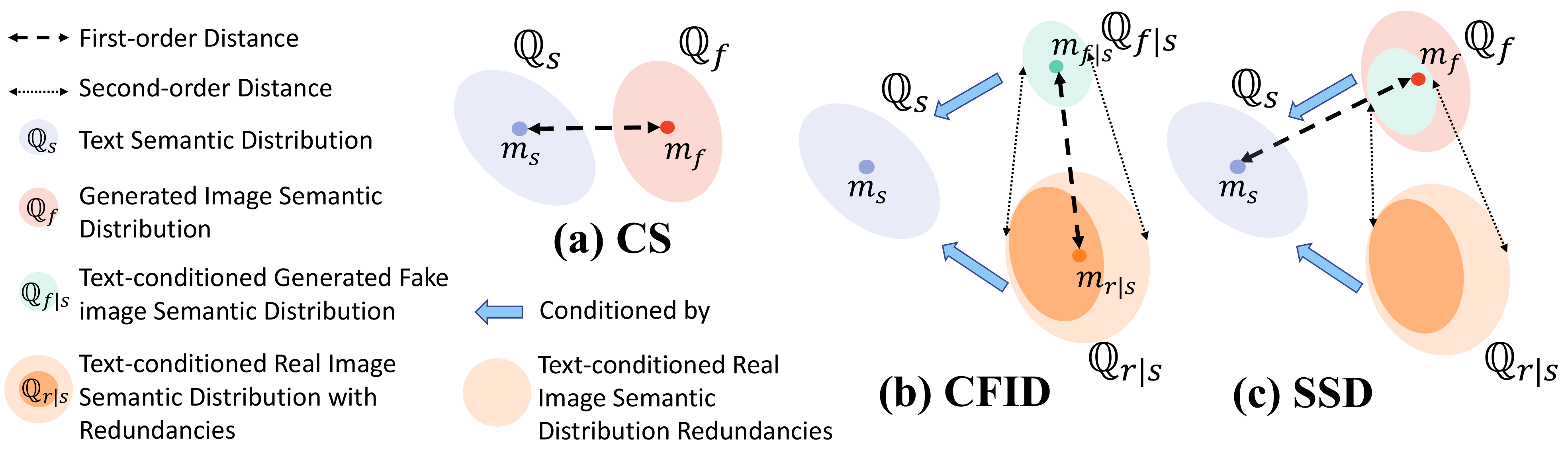}
\caption{ Diagrams of CS, CFID and $SSD$. 
}
\label{fig:ssd_explain}
\end{figure*}

Furthermore, since we use CLIP as encoders rather than random sampling, our metric mitigates the bias in GT data, enabling a convenient comparison across different datasets. As shown in Fig.~\ref{fig:banner}, our $SSD$ indeed reflects better text-image consistency than the other metrics.

\begin{figure*}[t]
\centering
\scriptsize
\includegraphics[width=\textwidth]{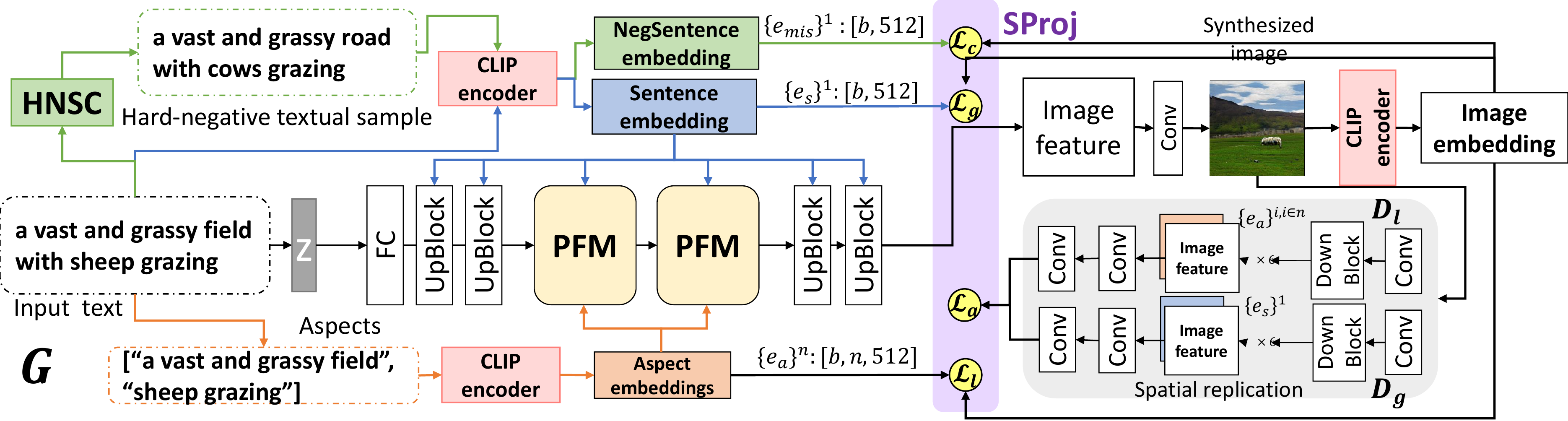} 
\caption{Overall Framework of PDF-GAN.}
\label{framework}
\end{figure*}

\section{Parallel Deep Fusion GAN}
In order to appraise our $SSD$'s feasibility, we further propose PDF-GAN that aims to improving text-image consistency under the guidance of $SSD$.
In this section, we present the Parallel Deep Fusion Generative Adversarial Networks (PDF-GAN) equipped with Hard-negative Sentence Constructor (HNSC) and Semantic Projection (SProj). PDF-GAN fuses semantic information at different levels by using Parallel Fusion Modules (PFM). For semantic supervision, global and local discriminators, semantic perceptual losses, and a contrastive loss are adopted. 
To capture semantic information in texts more precisely and robustly, HNSC creates stable and controllable hard negative samples, and SProj can overcome the semantic gap by constraining the semantic optimization direction. 

\subsection{Parallel Fusion}

In our model, CLIP is used as an encoder to map images and texts into a joint semantic space.
Guided by $SSD$, we confirm that using textual data at different levels improves text-image consistency.
Global-level feature $\{e_g\}^1$ is the caption embedding.
Local-level features $ \{e_l\}^{n}$ are Aspects embeddings which are $n$ key phrases extracted from the caption~\cite{ruan2021dae}. For example, `this black bird has yellow eyes and a long neck', its aspects $\{a\}^{n=3}$ are [`this black bird', `yellow eyes', `long neck']. 
Conditioned by $\{e_g\}^1$ and $\{e_l\}^n$, the generation process is  $x_{f}=G(z \mid \{e_g\}^1, \{e_l\}^{n})\; $ ($G(z)$ for short), where $z\sim N(0,1)$ is a given noise and $x_{f}$ is the generated image. 

In Generator $G$, we propose PFM for efficient fusion between global and local features. PFM takes the output from previous steps as inputs and  $\{e_g\}^1$ and $\{e_l\}^n$ as conditions (in Fig.~\ref{pfm} (c)). The input $h_{t-1}$ is first upsampled to $h_{t-1}'$, then  deep fusion (DF) is conducted on two groups of conditions. After DF, fused features from two branches are concatenated by channels and then go through a convolution layer and outputted as $h_{t}$. 
Especially for the Local DF branch in Fig.~\ref{pfm} (b), we modify the Fusion Block from DF-GAN~\cite{tao2020df} to take local features. In Fig.~\ref{pfm} (a), two groups of MLPs learn the scale and the bias conditioned by local semantics, respectively. Input $h_{t-1}$ is first expanded to the proper shape, then scaled and biased. 
The conditioned features are averaged and passed to later processors. Using PFM to fuse multiple levels of textual information efficiently, our PDF-GAN capture precise semantics while maintaining decent image quality. We use two PFMs in our experiments due to memory limitations.

\begin{figure*}[t]
\centering
\scriptsize
\includegraphics[width=0.85\linewidth]{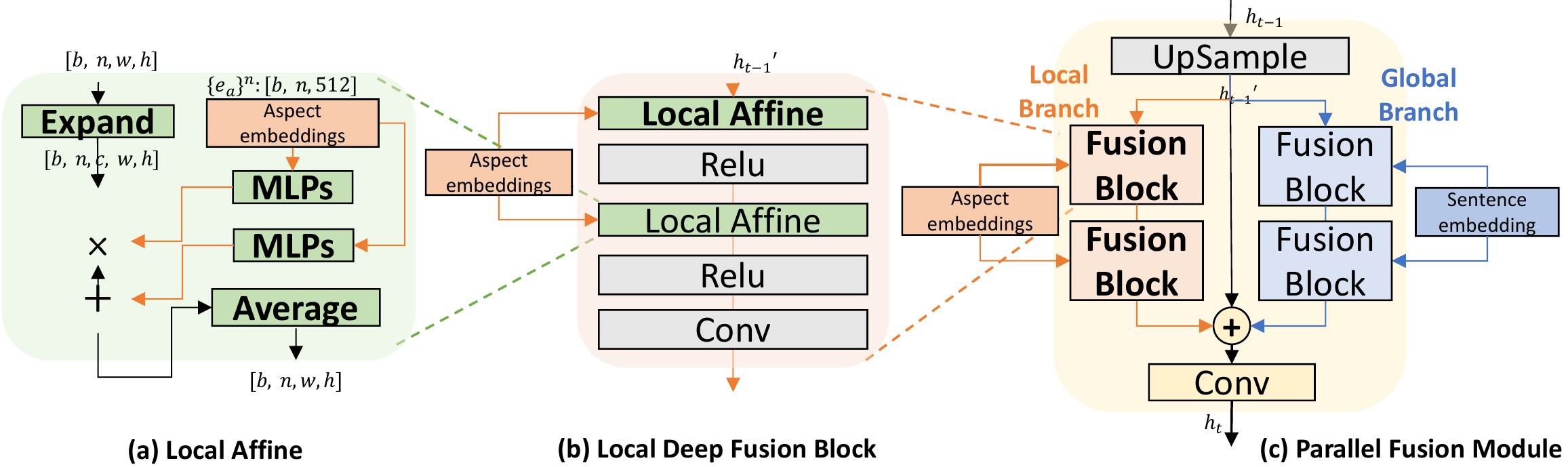} 
\caption{Parallel Fusion Module}
\label{pfm}
\end{figure*}

\subsection{Hard-negative Sentence Constructor}
HNSC constructs hard negative sentence samples by randomly replacing tokens in the given description according to the Part of Speech (POS). Nouns, verbs, and adjectives are replaced by other nouns, verbs, and adjectives.  For example, for the text `this bird is blue on its tail and has a long pointy beak,' our HNSC will randomly replace a certain percentage of words by its POS, (change `blue' to `red,' `tail' to `head,' etc.). Candidates for replacement are gathered from the dataset. 
HNSC produces stable and controllable hard negative textual samples which force discriminators to learn precise semantics. 

\subsection{Training Objectives}

We adopt the hinge loss~\cite{zhang2019self} for Discriminator $D$ and Generator $G$. $D$ usually takes four kinds of pairs: fake image with  real text ($G(z)$, $e$), real image with real text  ($x$, $e$), fake image with mismatched text ($G(z)$, $e_{m}$) and real image with mismatched text ($x$, $e_{m}$). 
To capture semantic information at the global and local levels, we use two discriminators $D_{g}$ and $D_{l}$. For each $D$ given by matched $k$ conditions $\{e\}^k$, and mismatched  $k$ conditions $\{e_{m}\}^k$, the loss with modified Matching Aware Gradient Penalty~\cite{tao2020df} is defined as:
\begin{align} 
\small
\label{equ:Dloss} 
    \mathcal{L}_{D} =\; &\mathbb{E}_{x_r\sim \mathbb{P}_{r}, e \in \{e\}^k}[1 - D_{g}(x,e)] \notag \\+ &\; 
     \frac{1}{2} [ \mathbb{E}_{x_r\sim \mathbb{P}_{r}, e_{m} \in \{e_{m}\}^k}[1 + D_{g}(x,e_{m} ) ] \notag \\+ &\; 
      \mathbb{E}_{G(z) \sim \mathbb{P}_{g}, e \in \{e\}^k}[1 + D_{g}(G(z),e)] ] \notag
     \\+& \;
      q \mathbb{E}_{x\sim \mathbb{P}_{r}}[(||\bigtriangledown_x D_{g}(x,\bar{e})|| + ||\bigtriangledown_{e}D(x,\bar{e})||)^p ]  \;.
\end{align}
The penalized term uses averaged $\{ e\}^k$ as the condition.
Loss functions for $D_g$ and $D_l$
both follow Eq.~(\ref{equ:Dloss}) with that the given conditions $\{e\}^k$ and $\{e_{m}\}^k$ are global level $\{e_g\}^1,\{e_{mg}\}^1 $ and local level ${\{e_l\}}^{n},{\{e_{ml}\}}^{n}$, respectively. Notice that $\{e_{mg}\}^1$ is  hard-negative sentence constructed by HNSC and ${\{e_{ml}\}}^{n}$ are mismatched aspects from the batch. $q$ and $p$ are hyper-parameters set to 2.0 and 6 in our experiments. The adversarial loss of $G$ is defined as:
\begin{align}
\small
      \mathcal{L}_{a} = & -\mathbb{E}_{G(z)\sim \mathbb{P}_{g}}[D_{g}(G(z), \{e_g\}^1)] \notag \\- &\; \mathbb{E}_{G(z)\sim \mathbb{P}_{g}}[D_{l}(G(z), \{e_l\}^n)] \;.
\end{align}
To enhance semantic information at different levels in $G$, we adopt global and local semantic perceptual losses $\mathcal{L}_{g}$, $\mathcal{L}_{l}$: 
\begin{align}
\small
  \mathcal{L}_{g}  = f_{C}(G(\tilde{z}))^T \cdot \tilde{e}_{g}\;, \; \mathcal{L}_{l} =  \frac{1}{n}\sum_{i=1}^{n} f_{C}(G(\tilde{z}))^T \cdot {\tilde{e}_{l}}^i. \notag
\end{align}
A contrastive loss $ \mathcal{L}_{c}$ is introduced to further repel mismatched samples: 
\begin{align}
\small
    \mathcal{L}_{c} & =  \frac{ f_{C}(G(\tilde{z}))^T \cdot {\tilde{e}}_{mg} } {f_{C}(G(\tilde{z}))^T \cdot \tilde{e}_{mg} + f_{C}(G(\tilde{z}))^T \cdot \tilde{e}_g }  \;.
\end{align}
The final generative loss $ \mathcal{L}_{G}$ combines above four losses:
\begin{equation}
\small
\label{equ:Gloss}
    \mathcal{L}_{G} =  \underbrace {\lambda ( \mathcal{L}_{g} +  \mathcal{L}_{l} + \ \mathcal{L}_{c}) }_{Semantic\ loss\  \mathcal{L}_{s}} +  \underbrace {\mathcal{L}_{a}}_{Adversarial\ loss }
\end{equation}
where we set $\lambda = 10$ empirically in our experiments. 

\begin{algorithm}[t]
\small
\caption{Semantic Projection in One Step }
\label{alg:algorithm}
\textbf{Require}: Training data($\{e_g\}^{1},\{e_l\}^{n}$), generator $G$, discriminators $D_l,D_g$, and $z\sim N(0,1)$
\begin{algorithmic}[1] 
    \STATE $x' = G(z \mid \{e_g\}^{1}.\{e_l\}^{n})$
    \STATE $y_g' =  D_g(x' \mid \{e_g\}^{1})$, \; $y_l' =  D_a(x' \mid \{e_l\}^{n})$ 
    \STATE Calculate loss $\mathcal{L}_{D_{g}}$, $\mathcal{L}_{D_{l}}$ for $D_{g}$, $D_{l}$  \quad  { $\triangleright$ See Eq.~(\ref{equ:Dloss}). }
    \STATE ${\delta}_{D_{g}}  \gets  \bigtriangledown \mathcal{L}_{D_{g}}$, ${\delta}_{D_{l}}  \gets  \bigtriangledown \mathcal{L}_{D_{l}}$
    \STATE $\theta_{D_{g}} \gets \theta - \alpha {\delta}_{D_{g}} $, $\theta_{D_{l}} \gets \theta - \alpha {\delta}_{D_{l}} $ 
    
    \STATE Calculate $\mathcal{L}_{a}$, $\mathcal{L}_{s}$  \quad { $\triangleright$ See Eq.~(\ref{equ:Gloss}).}
    
    \STATE ${\delta}_{a} \gets \bigtriangledown \mathcal{L}_{a}$, ${\delta}_{s} \gets \bigtriangledown \mathcal{L}_{s}$
    
    \STATE  $\bar{\delta}_{s} \gets$ \textit{PROJECT}$({\delta}_{a},{\delta}_{s})$  \quad { $\triangleright$ See Eq.~(\ref{equ:SProj}).}
    \STATE $\theta_G \gets \theta_G - (\alpha_a {\delta}_{a} + \alpha_s \bar{\delta}_{s}) $ \\
\textbf{Return}: $x'$ 
\end{algorithmic}
\end{algorithm}

\subsection{Semantic Projection}

Since the semantic gap in CLIP space causes conflicts in optimization directions between $\mathcal{L}_{a}$ and $\mathcal{L}_{s}$, we design SProj to overcome the conflicts (see Fig.~\ref{fig:clipemb}(a)).
Inspired by GEM for continuous learning~\cite{lopez2017gradient}, we treat minimizing $\mathcal{L}_{a}$ and $ \mathcal{L}_{s}$ as two tasks. 
Instead of training on two tasks alternately, we optimize them simultaneously. Algorithm \ref{alg:algorithm} shows the training and protocol of SProj in one step. In each step, after we calculate the gradients ${\delta}_{a}$, ${\delta}_{s}$ for $\mathcal{L}_{a}$ and $\mathcal{L}_{s}$, we conduct \textit{PROJECT} on ${\delta}_{s}$ before we process backpropagation on both tasks. If there is a direction conflict, the semantic optimization direction ${\delta}_{s}$ will be re-projected to a new direction $\bar{\delta}_{s}$ in which it can optimize for $\mathcal{L}_{s}$ while not enlarging $\mathcal{L}_{a}$. \textit{PROJECT} is defined as: 
For gradients $\Delta:= -({\delta}_{a}$, ${\delta}_{s})$, if $ \left \langle {\delta}_{a}, {\delta}_{s} \right \rangle \ge 0$, we solve the Quadratic Program to get solution $\vartheta ^ \ast$ for $\vartheta$: 
\begin{align} 
\small
\label{equ:SProj}
     \min_\vartheta  {1}/{2}\  [\vartheta^T {\Delta}{\Delta}^T + {\delta}_{s}^T {\Delta}^T \vartheta ] \;,  \quad  s.t.  \quad  \vartheta \ge 0\;.
\end{align}
The projected gradients  can then be updated as $\hat{\delta} = G^T \vartheta ^ \ast + g_{s}$.
With SProj, the model can converge to the closest point between texts and image features in the semantic space (as shown in Fig.~\ref{fig:clipemb} (a)), attaining a mutual balance between text-image consistency and image quality.

\section{Experiments}
\label{sec:exp}

We conduct two groups of experiments: 1) The first group of experiments are performed on PDF-GAN to showcase the feasibility of $SSD$ for text-to-image task. It also suggests that our $SSD$ can be used to guide text-to-image model designing. 2)
The second group is a series of simple experiments that show the feasibility of $SSD_T$ for image captioning. These experimental results reflect  that the limitations of the existing text-image consistency metrics  can be alleviated by our $SSD$. 

\subsection{Experimental Setup for PDF-GAN}
In our experiments, we evaluate our metric and method on two datasets, CUB~\cite{wah2011caltech} and COCO~\cite{lin2014microsoft}. The generated images are judged by both text-image consistency metrics and image quality metrics.
%
For qualitatively evaluating the text-image consistency, the proposed $SSD$ is used and compared with other popular metrics $R$~\cite{xu2018attngan}, CS~\cite{hessel2021clipscore}, and CFID~\cite{soloveitchik2021conditional}.  For a fair comparison, CFID is calculated from CLIP embeddings, and all CLIP-based metrics, $SSD$, CS and CFID, are scaled by 100 in our experiment.
The standard metrics, Inception Score (IS)~\cite{salimans2016improved} and Fr{\'e}chet Inception Distance (FID)~\cite{heusel2017gans}, are used to quantitatively evaluate the generated image quality. IS was not used to evaluate COCO because it works not well as indicated in~\cite{tao2020df, zhang2021dtgan}.  Specially, DF-GAN is also trained with CLIP as encoders (DF-GAN+CLIP) by using its original settings for comparison. 

All the metrics are computed over 30K generated images.
We use the released models from competitors for the metric calculation of text-image consistency, and directly adopt their reported image quality results.
The number of the aspect per caption is set to 3, and the maximum number of words per caption is set to 18.

The parameters used for our experiments on CUB and COCO datasets are Table~\ref{tab:paras}. 

\begin{table*}[t]
\centering
\small
\begin{tabular}{c|cc|c|cc}
\hline
\textbf{Parameters}           & \textbf{CUB}      & \textbf{COCO}      &\textbf{Parameters}           & \textbf{CUB}      & \textbf{COCO}     \\ \hline
TRUNCATED\_NOISE     & False    & True    & MAX\_EPOCH           & 600      & 120   \\
MAX\_ATTR\_NUM       & 3        & 3       & DISCRIMINATOR\_LR    & e-4 * 2  & e-4 * 2 \\
MAX\_ATTR\_LEN       & 5        & 5       & GENERATOR\_LR        & 2e-4 / 2 & 2e-4 / 2  \\
BASE\_SIZE           & 64       & 64      & CAPTIONS\_PER\_IMAGE & 10       & 5     \\
BATCH\_SIZE          & 32       & 32      & EMBEDDING\_DIM       & 512      & 512    \\
WORDS\_NUM           & 18       & 18       & & & \\
\hline
\end{tabular}
\caption{Parameters for our experiments for PDF-GAN.}
\label{tab:paras}
\end{table*}

\begin{table*}[t]
\centering
\small
\begin{tabular}{c|c|ccc|c|ccc}
\hline
     & \multicolumn{4}{c|}{\textbf{CUB}}                                           & \multicolumn{4}{c}{\textbf{COCO}}    \\ \cline{2-9}
           & {$SSD\downarrow$}        & CS$\uparrow$              & $R\uparrow$      & CFID$\downarrow$    & {$SSD\downarrow$}        & CS$\uparrow$              & $R\uparrow$      & CFID$\downarrow$  \\ \hline                                             
GT                    & 72.07          & 69.83             & 21.26           & 0.00             & 72.68          & 68.30             & 69.26           & 0.00 \\ \hline                                               
AttnGAN               & 76.10          & 68.00             & 67.82           & 6.89             & 88.01          & 50.48             & 85.47           & 7.75\\                                              
DM-GAN                & 75.51          & 69.03             & 72.31           & \textbf{3.04}    & 86.62          & 51.10             & 88.56           & \textbf{6.25}   \\                                             
DAE-GAN               & 76.61          & 68.93             & \textbf{85.45}  & 4.16              & 87.98          & 51.88             & \textbf{92.61}  & 17.11 \\                                              
DF-GAN                & 76.82          & 67.80             & 39.50           & 4.55              & 82.67          & 60.85             & 51.75           & 7.43 \\
\textbf{PDF-GAN}      & \textbf{72.72} & \textbf{80.28}    & 11.75           & 12.86               & \textbf{75.29} & \textbf{72.71}    & 49.79           & 8.68   \\ \hline                                
\end{tabular}
\caption{Text-image consistency results of $ SSD  $, $CS $, $R $ and $CFID  $ on CUB and COCO. Best results for each metric are highlighted. 
}
\label{tab:semantic res}
\end{table*}

\begin{table*}[t]
\centering
\small
\begin{tabular}{c|ccc|c|ccc|c}
\hline
    & \multicolumn{4}{c}{\textbf{CUB}}        & \multicolumn{4}{|c}{\textbf{COCO}}             \\ \cline{2-9}                                                                       
                 &$SSD \downarrow$                 & $SS \downarrow$                     & {$dSV  \downarrow $} &{$TrSV \downarrow $}  &$SSD\downarrow$                 & $SS\downarrow$                      & {$dSV\downarrow  $} & $ TrSV \downarrow $ \\ \hline
GT         & 72.07                & 72.07                    & {0.00}               & 0.00                  & 72.68              & 72.68                     & {0.00}              & 0.00     \\ \hline
AttnGAN              & 76.10                & 72.80                    & {3.30}               & 0.26                   & 88.01                 & 79.81                     & {8.20}              & 1.26\\
DM-GAN                & 75.51                & 72.39                    & \textbf{{3.12}}      & \textbf{0.19}         & 86.62                 & 79.56                     & {7.06}              & 0.90\\
DAE-GAN             & 76.61                & 72.43                    & {4.18}               & 0.36                    & 87.98                & 79.25                     & {8.73}              & 1.55 \\
DF-GAN              & 76.82                & 72.88                    & {3.94}               & 0.32                     & 82.67               & 75.66                     & {7.01}              & 0.89\\
DF-GAN+CLIP  & 76.63 & 72.00 & 4.63  & 0.11 & 78.75  & 76.53 & \textbf{2.22} & \textbf{0.20}  \\
\textbf{PDF-GAN}      & \textbf{72.72}       & \textbf{67.89}           & {4.83}               & 0.59                  & \textbf{76.49}        &\textbf{70.81}             & {{5.68}}      & {0.60}\\ \hline
\end{tabular}
\caption{Results of $SSD$, $SS$, $dSV$ and $TrSV$. Best results are highlighted as \textbf{bold}. } 
\label{tab:dsv-trsv}
\end{table*}

\subsection{Experimental Setup for Text Modification}

For testing $SSD_T$, we modify the original texts from CUB dataset and use modified texts as toy examples for image captioning task. 
The group of experiments ``Cut Words" means that we slice original captions and take them as fake captions for Eq.~(\ref{equ:ssd_rf}) calculation and $[*:*]$ indicates the slice we take. The group of experiments ``Replaces Words" mean that we replace words in the caption and use them as fake captions. We engage our HNSC to produce the replaced captions. Parameter $r$ means the ratio of the caption that we replace. We compare our $SSD_T$ with image captioning metric CLIPScore~\cite{hessel2021clipscore} (CS).
 We also deploy 30K text-image pairs for calculating $SSD_T$ and CS.

\subsection{Appraise $SSD$}

We appraise our $SSD$ by comparing it with
CS, $R$ and CFID on AttnGAN~\cite{xu2018attngan}, DM-GAN~\cite{zhu2019dm}, DAE-GAN~\cite{ruan2021dae}, DF-GAN~\cite{tao2020df}, DF-GAN+CLIP and our PDF-GAN. Quantitative results are presented in Table~\ref{tab:semantic res}~\footnote{$SSD$, $SS$, $dSV$ and $TrSV$ are by $100$ for better readability in this paper.}, and the generated examples are visualized in Figs.~\ref{fig:bird examples}-\ref{fig:coco examples}. 

\begin{figure}[t]
\centering
\includegraphics[width=0.9\linewidth]{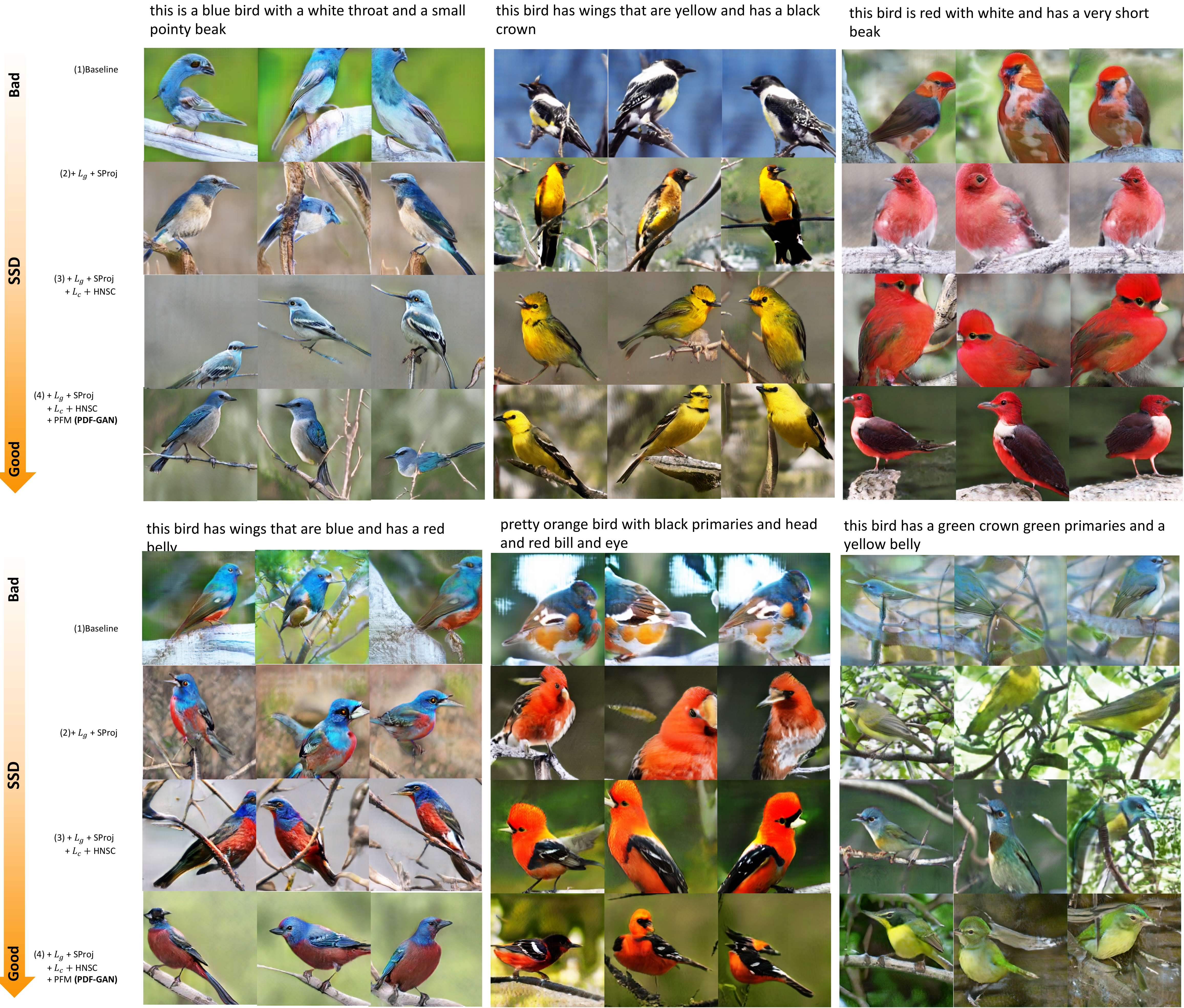} 
\caption{Generated examples of ablation studies. Generated examples organized as $SSD$-descending order--Continuous. Note that ablation task +$\mathcal{L}_{c}$ (CS: 71.76, $SSD$: 74.49) obtains better CS but worse $SSD$ than +HNSC (CS: 71.08, $SSD$: 73.68). }
\label{fig:ablation examples}
\end{figure}

\begin{table*}[t]
\centering
\small
\begin{tabular}{m{0.9cm}<{\centering}|m{0.8cm}<{\centering}m{0.6cm}<{\centering}m{0.7cm}<{\centering}|m{0.6cm}<{\centering}m{1cm}<{\centering}|m{0.8cm}<{\centering}|m{0.8cm}<{\centering}m{0.6cm}<{\centering}m{0.7cm}<{\centering}|m{0.6cm}<{\centering}m{1cm}<{\centering}}
\hline
 \textbf{Cut}  & $SSD_T\downarrow$              & $SS\downarrow$        & $dSV\downarrow$                     & CS$\uparrow$              & CFID           $\downarrow$     & \textbf{Rep} & $SSD_T\downarrow$              & $SS\downarrow$        & $dSV\downarrow$                     & CS$\uparrow$              & CFID $\downarrow$   \\ \hline 
GT                      & 72.08            & 72.08     & 0.00                    & 69.80             & 0.00    & GT                      & 72.08            & 72.08     & 0.00                    & 69.80             & 0.00    \\ \hline
  {[}1:{]}                & \textbf{72.81}   & 71.55     & 1.26                    &\textbf{71.13}     & 2.01    & r=0.05                  & 78.46            & 73.77     & 4.69                    & 65.58             & 9.22 \\
  {[}1:-1{]}              & \textbf{72.92}   & 71.61     & 1.31                    & \textbf{70.98}    & 2.12     & r=0.1                   & 80.39            & 74.04     & 6.35                    & 64.90             & 10.66   \\
  {[}2:{]}                & 76.05            & 72.50     & 3.55                    & 68.75             & 2.01     & r=0.6                   & 92.17            & 76.43     & 15.74                   & 58.93             & 25.93   \\
  {[}3:{]}                & 79.03            & 72.84     & 6.19                    & 67.90             & 7.78     &&&&& \\ \hline

\end{tabular}
\caption{Results of `Cut Words' (`Cut' at right-hand side) and `Replace Words' (`Rep' at left-hand side) experiments of $SSD_T$ for CUB.  $SSD_T$, $dSV$, $SS$, and CS are scaled by $100$ for readability. Results exceed GT are highlighted as \textbf{bold}. }
\label{tab:ssdontext2}
\end{table*}

As shown in Table~\ref{tab:semantic res}, $R$'s indirect measurement and highly-dataset biased sampling generate even better scores for most methods than GT, indicating its severe limitations.
Leveraging  CLIP embeddings, CS can better reflect text-image consistency. However, since CLIP may suffer from binding attributes to objects~\cite{ramesh2022hierarchical}, CS struggles to reflect precise semantics consistency. E.g., it might lead that our PDF-GAN exceeds GT in CS. Such drawback of CS can also be seen in Fig.~\ref{fig:ablation examples} 
from our ablation studies, where the model failing to capture precise semantics like `orange and white' but obtains higher CS. It can also been seen in Table.~\ref{tab:ssdontext2} where modified captions exceed CS on real captions.

CFID aims to measure the distance between images' distributions approximately. 
If we compare the CFID scores in Table~\ref{tab:semantic res} with the corresponding examples in Fig.~\ref{fig:bird examples}, Fig.~\ref{fig:coco examples}, it is evident that CFID is not in line with the consistency. This is mainly because real images usually contain redundancies not mentioned by texts which may however be not reacquired in the generated images, causing unsuitable fist-moment term in Eq.~(\ref{equ:ssd_cfid}). 

Our $SSD$ mitigates the drawbacks of both CS and CFID. 
As shown in Fig.~\ref{fig:bird examples} and Fig.~\ref{fig:coco examples} that are organized in $SSD$ descending-order horizontally,  text-image consistency can indeed be observed   better from left to right. In contrast to CS in Fig.~\ref{fig:ablation examples}, our $SSD$ demonstrates clearly better semantic consistency. Furthermore, since our $SSD$ alleviates the dataset bias, we can compare $SSD$ across different datasets. 

\begin{figure}[htbp]
\centering
\includegraphics[width=0.9\linewidth]{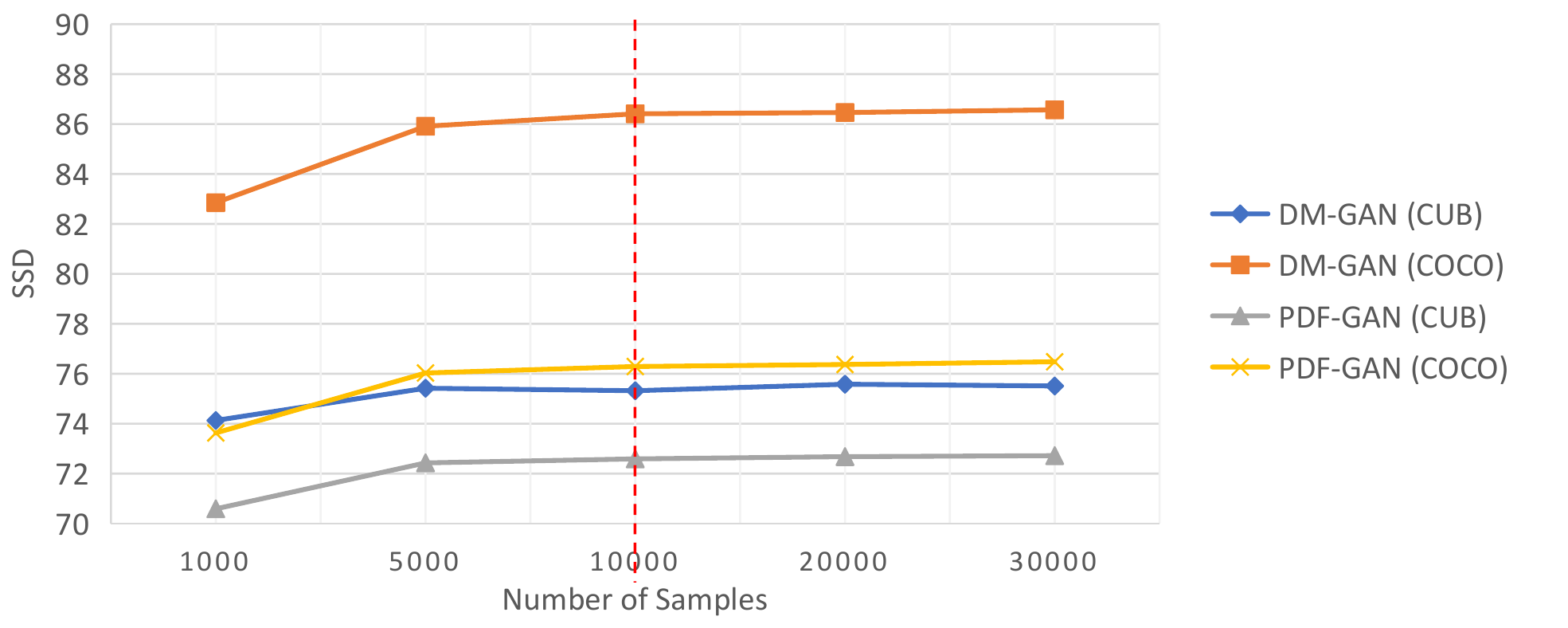}
\caption{$SSD$ of DM-GAN~\cite{zhu2019dm} and PDF-GAN vs. different number of samples.}
\label{fig:metric_stable}
\end{figure}

\subsubsection{$SS$ and $dSV$}

We show that $dSV$ has a significant contribution when $SS$ scores are similar. Since the other methods may lead to high semantic bias, we take the results from our ablation studies for a better demonstration. Quantitative results of ablation studies are shown in Table~\ref{tab:ablationsetting} and qualitative results are shown in Fig.~\ref{fig:ablation examples}.

 

It can be observed that when our $SSD$  is reduced, the generated images tend to contain more consistent semantics with the given texts. Fig.~\ref{fig:ablation examples}(1) does not obtain good text-image consistency due to  high semantic bias even with a low $dSV$. When $SS$ achieves a better level of about $69$,  $dSV$ makes a difference. 
As shown in Fig.~\ref{fig:ablation examples}(2)-(4), even with a similar $SS$, the models with better $SSD$ can lead to significantly better text-consistent images. Note that task (3) with higher CS but lower $SSD$ and (4) with lower CS and higher $SSD$ in Fig.~\ref{fig:ablation examples}, task (4) are more consistent with the given texts, indicating the importance of the correction brought by $dSV$.


\begin{figure*}[t]
\centering
\scriptsize
\includegraphics[width=0.9\textwidth]{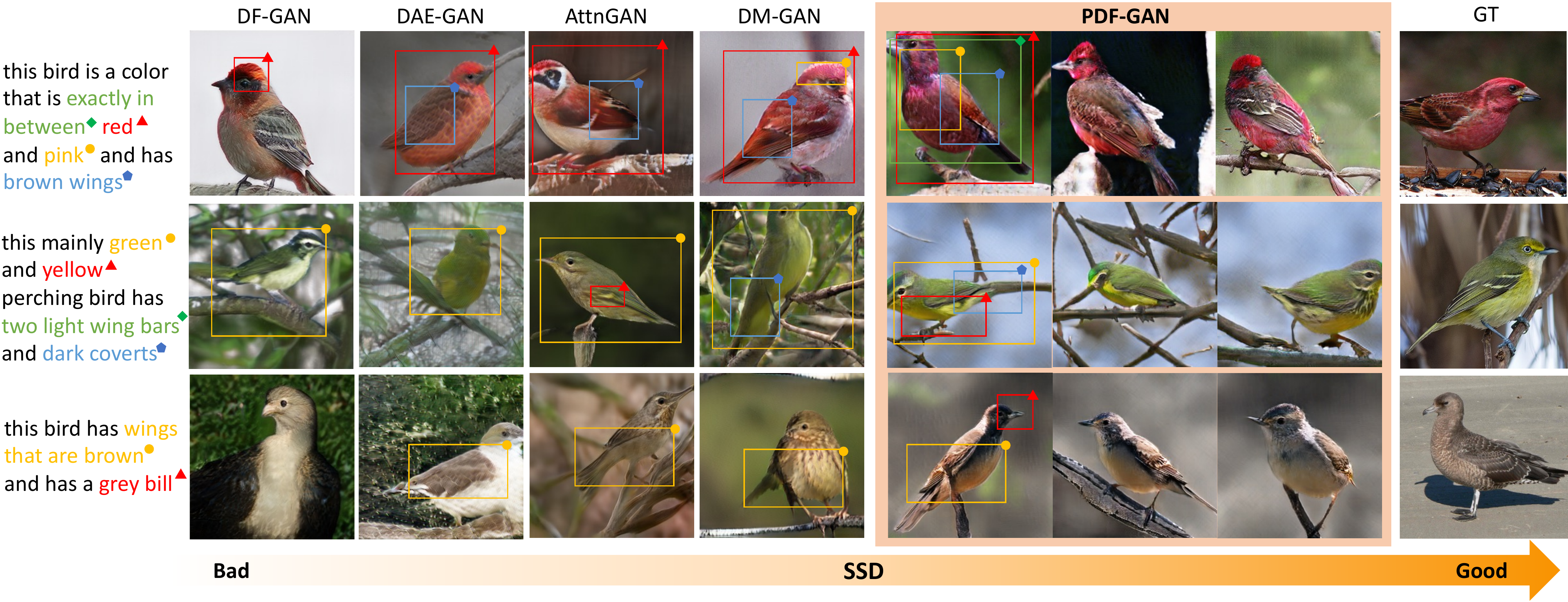}
\caption{Examples for T2I synthesis by AttnGAN, DM-GAN, DAE-GAN, DF-GAN and our PDF-GAN on CUB.}
\label{fig:bird examples}
\end{figure*}

\subsubsection{$dSV$ Vs. $TrSV$}
\label{sec:ssdthancfid}

We show additional experiments to support that our $SSD$'s second-moment term ($dSV$) is more indicative of the major semantic variations than $CFID$'s second-moment term ($TrSV$). Notice that we exploit the diagonal part of $\mathbb{C}$ to calculate $TrSV$. We also display the first-moment term ($SS$) of $SSD$ for better readability.
In Table~\ref{tab:dsv-trsv}, it clearly shows that $TrSV$ and $dSV$ have the same tendency to reflect semantic changes, as they are equivalent. But, $TrSV$ cannot obviously represent semantic changes, while our $dSV$ can produce more significantly different indications.  

\subsubsection{Appraise $SSD_T$}

 Table~\ref{tab:ssdontext2} exhibits results of $SSD_T$
In the first group of experiments, when we cut out the first word and both first and last words, they have better CS scores than ground truth (GT),  suggesting that CS is quite limited. Our $SSD_T$'s $dSV$ can overcome this issue. It can be seen that our $SSD_T$ gets worse as the modification progresses. CFID gets the same tendency as $SSD_T$ because there are quite fewer redundancies between texts, i.e., the images usually contain all textual described information. This also indicates that the text-image consistency evaluation is more difficult than image-text consistency evaluation.

\begin{figure}[t]
\small
\centering
\includegraphics[width=0.95\columnwidth]{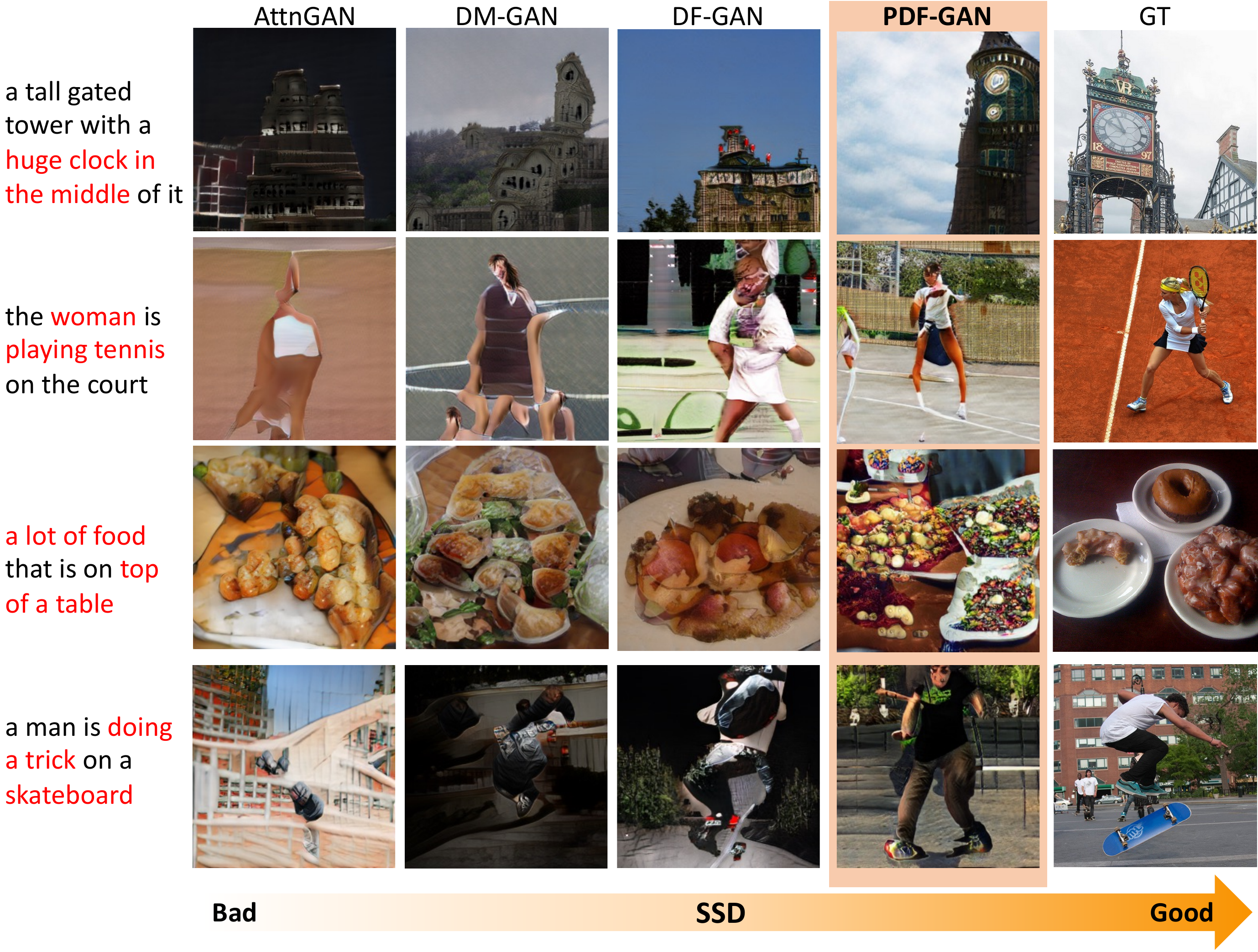} 
\caption{Examples for T2I synthesis by AttnGAN, DM-GAN, DF-GAN and our PDF-GAN on COCO.}
\label{fig:coco examples}
\end{figure}

\begin{table}[t]
\centering
\begin{tabular}{c|cc|c}
\hline
\multicolumn{1}{l|}{} & \multicolumn{2}{c|}{\textbf{CUB}}           & \textbf{COCO}              \\ \cline{2-4} 
                      &IS   $ \uparrow $ & FID $ \downarrow $ & $FID \downarrow  $ \\ \hline
AttnGAN               & 4.36           & 23.98             & 35.49             \\
DM-GAN                & 4.75           & 16.09             & 32.64             \\
DAE-GAN               & 4.42           & 15.19             & 28.12             \\
MirrorGAN             & 4.56           & 18.34             & 34.71             \\
SD-GAN                & 4.67           & -                 & -                 \\
TIME                  & \underline{4.91}     & \underline{14.30}             & 34.14       \\
DF-GAN                & \textbf{5.10}  & 14.81             & \textbf{19.32}    \\

DF-GAN + CLIP         & 4.63        & 24.16                 & 21.84     \\
\textbf{PDF-GAN}      & 4.59           & \textbf{12.30}    & \underline{21.01}       \\ \hline
\end{tabular}
\caption{$IS$ on CUB and $FID$ on CUB and COCO. Best and second best results are highlighted as \textbf{bold} and by \underline{underlines}, respectively.}
\label{tab:image_res}
\end{table}


\subsubsection{Stability of $SSD$}
\label{sec:ssdstable}
As shown in Fig.~\ref{fig:metric_stable} we use different numbers of samples for $SSD$ calculation of DM-GAN~\cite{zhu2019dm} and our PDF-GAN. It turns out that our $SSD$ becomes stable after more than 10K samples are used. Thus we recommend using more than 10K samples for the $SSD$ calculation. 

\begin{table*}[t]
\centering
\small
\begin{tabular}{l|m{0.7cm}<{\centering}m{1.2cm}<{\centering}m{0.7cm}<{\centering}m{0.7cm}<{\centering}m{0.7cm}<{\centering}m{0.7cm}<{\centering}m{1.2cm}<{\centering}|ccc|c|cc}
\hline
Task                            & {CLIP} & {$D_{g}$ + $ \mathcal{L}_{g}$ }      & {SProj}        &   {$\mathcal{L}_{c}$ \;\;}  &  {HNSC}  & {PFM}  & {$D_{l}$ + $ \mathcal{L}_{l}$ }  & {$ SSD \downarrow $}     &{$SS\downarrow$ }     &{$dSV\downarrow$  }            & {CS  $\uparrow $ }     &{IS $\uparrow $ }        & {FID $\downarrow $ } \\ \hline
Baseline                        &\checkmark    &                &                   &                &                  &           &          & 76.63                  &72.00              &\textbf{4.63}              & 59.94                & 4.50                  & 24.16                                                    \\
+  $\mathcal{L}_{g}$            &\checkmark    &\checkmark      &                   &                &                  &           &           & 75.59                  &69.55             &6.04                        & 68.10                & 4.63                  & 14.69                                                  \\
+ SProj                         &\checkmark    &\checkmark      &\checkmark         &                &                  &           &           & 74.10                  &68.77             &5.33                        & 70.70                & \textbf{4.81}         & 15.55                                                  \\ \hline
+  $\mathcal{L}_{c}$            &\checkmark    &\checkmark      &\checkmark         &\checkmark      &                  &           &             & 74.49                  &68.45             &6.04                        & 71.76                & 4.58                  & 14.78                                                 \\
+ HNSC                          &\checkmark    &\checkmark      &\checkmark         &\checkmark      &\checkmark        &           &              & 73.68                  &68.67             &5.01                        & 71.03                & 4.61                  & 14.41                                               \\ \hline
\makecell[l]{+ PFM \\ \textbf{PDF-GAN}  }      &\checkmark    &\checkmark      &\checkmark         &\checkmark      &\checkmark        &\checkmark &\checkmark      & \textbf{72.72}         &\textbf{67.89}    &4.83                        & \textbf{73.63}       & 4.59                  & \textbf{12.30}                          \\ \hline
\end{tabular}
\caption{Ablation settings and ablation study results of different components on CUB. All components are added cumulatively as shown in Table~\ref{tab:ablationsetting}.  Note experiment $+L_{c}$ uses random samples for $L_{c}$ calculation. }
\label{tab:ablationsetting}
\end{table*}

\begin{figure*}[t]
\centering
\begin{minipage}{0.48\textwidth}
\centering
    \scriptsize
    \includegraphics[width=\linewidth]{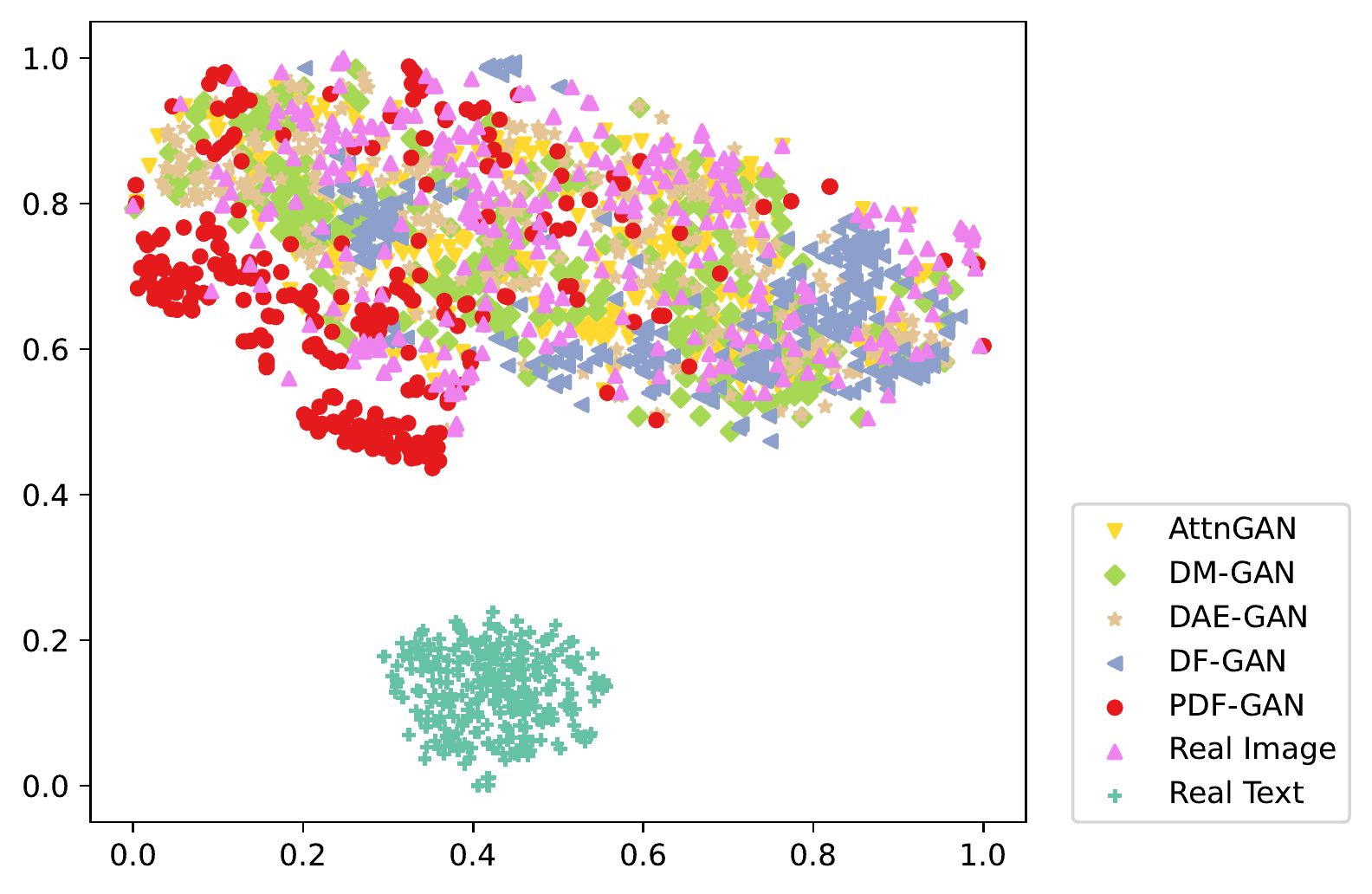}
    \caption{TSNE map of CLIP embeddings of generated images by AttnGAN, DM-GAN, DAE-GAN, DF-GAN, and our PDF-GAN compared with real images and texts on CUB. }
    \label{fig:bird_sd}
\end{minipage}
\hfill
\begin{minipage}{0.48\textwidth}
  \centering
    \scriptsize
    \includegraphics[width=\linewidth]{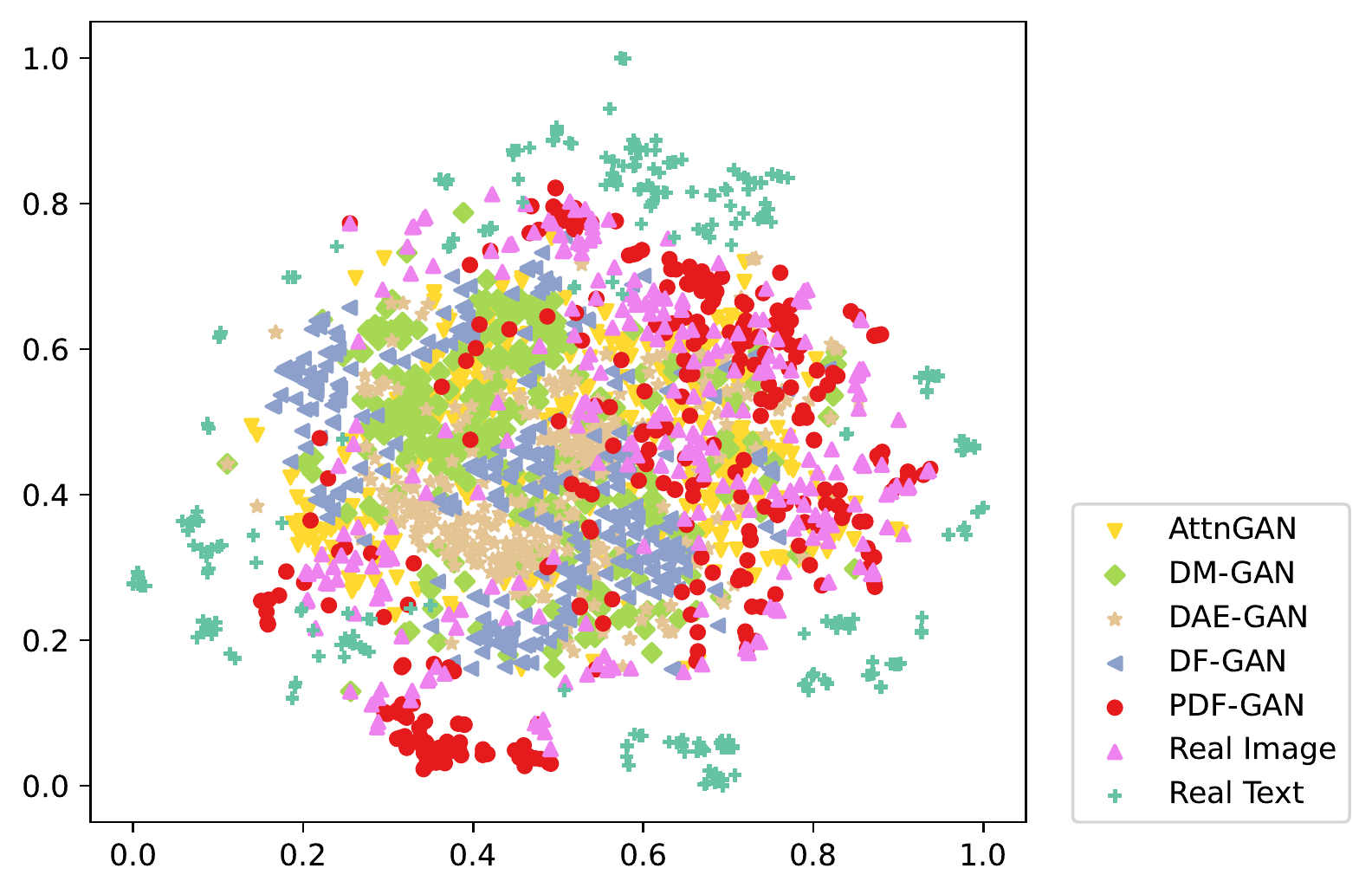}
    \caption{TSNE map of CLIP embeddings of generated images by AttnGAN, DM-GAN, DAE-GAN, DF-GAN, and our PDF-GAN compared with real images and texts on COCO. }
    \label{fig:coco_sd}
\end{minipage}
\end{figure*}

\subsection{Appraise PDF-GAN}

If we examine $SSD$ scores in Table~\ref{tab:semantic res}, all the methods show better text-image consistency on CUB than that on COCO, while our PDF-GAN achieves the best text-image consistency. Meanwhile, PDF-GAN can maintain decent image quality, achieving the best ($12.30$) and second best ($21.01$) in FID on CUB and COCO, respectively. 

As shown in Figs.~\ref{fig:bird examples}-\ref{fig:coco examples}, PDF-GAN can capture complex and precise semantics while maintaining decent quality.
For a less complex dataset like CUB, 
our PDF-GAN surpasses $2.80$ in $SSD$ over the second best method while DF-GAN produces high-quality but text-mismatched images; other attention-based methods fail to capture detailed semantics. Moreover, PDF-GAN can stably generate complex semantics in visual (e.g. in Fig.~\ref{fig:bird examples} where the bird matches  `exactly in red and pink' and other highlighted phrases).  
The superiority of our PDF-GAN is more evident on COCO. 
In comparison, DF-GAN and DF-GAN+CLIP have more balanced results on both datasets and may also produce high-quality images, but the overall text-image consistency is not promising. Our proposed PDF-GAN can capture the complex semantics and have $6.18$ improvement in $SSD$ on COCO, suggesting our model is able to capture complex semantics (see Fig.~\ref{fig:coco examples}). 

For better visualization, we illustrate semantic distributions of fake images generated by our PDF-GAN and other models by comparing with real images and texts as Fig.~\ref{fig:bird_sd} and Fig.~\ref{fig:coco_sd}. Our generated images obtain the semantic distribution closer to given texts than other methods. 

\subsubsection{Ablation Study}
We take CUB as one typical example for ablative analysis. Our ablation study settings and results can be seen in Table~\ref{tab:ablationsetting} whilst examples are shown in Fig.~\ref{fig:clipemb} (b).
Two findings are verified: 
1) The semantic gap can cause optimization conflicts between adversarial and semantic losses. 2) Using random samples from the dataset or the training batch may prevent discriminators from learning precise semantics.
Overall, these two issues could be well tackled by the proposed SProj and HNSC. 

Reported in the first group of experiment of Table~\ref{tab:ablationsetting}, brutally adding semantic perceptual loss can improve image quality, but may not significantly improve text-image consistency due to the semantic gap. Specifically, the decrease of $SS$ is accompanied by a significant increase of $dSV$, indicating when the semantic distribution of generated images gets closer to the texts, its distribution form is likely to be distorted.  
SProj can strengthen the $\mathcal{L}_{g}$'s supervision, forcing the model to be further optimized in semantics without sacrificing the image quality. As shown in Fig.~\ref{fig:clipemb} (b)(2), SProj with semantic perceptual loss benefits capturing more semantic details, easing semantic distribution distortion of generated images. 

The second group shows that contrastive loss $\mathcal{L}_{c}$ may deteriorate text-image consistency if random mismatched texts are given.
The scores of $SSD, dSV$, and IS get worse when this loss is added.
However, HNSC can mitigate this issue by offering stable and controllable hard-negative sentences as mismatched samples. It pushes the discriminators to learn more semantic differences between positive and hard negative texts, thus improving the model's capability in capturing precise semantics through reducing semantic distribution distortion of generated images, as seen in Fig.~\ref{fig:clipemb} (b)(3) and Fig.~\ref{fig:ablation examples}. It also eases the degradation in IS and improves FID.

\begin{table}[htbp]
\centering
\small
\begin{tabular}{c|cc|cc}
\hline
                            & $SSD \downarrow $  & $S_{C} \uparrow$ & $IS \uparrow $ & $FID \downarrow $ \\ \hline
$\lambda$ = 1  & 79.26                           & 67.38      & 4.82           & 14.13             \\ 
$\lambda$ = 10 & \textbf{72.72}                  & 80.28      & {4.59 }          & \textbf{12.30 }            \\
$\lambda$ = 20  &72.86                           & \textbf{ 81.60 }     & \textbf{4.91 }          & 14.73             \\ \hline
\end{tabular} %
\caption{Sensitive analysis for $\lambda$.}
\label{tab:sensitive}
\end{table}

\subsubsection{Sensitivity Analysis of $\lambda$}
We conduct a sensitivity analysis for  $\lambda$ in the $\mathcal{L}_{G}$ in Eq.~(\ref{equ:Gloss}).
As shown in Table~\ref{tab:sensitive}, small $\lambda$ causes degradation of text-image consistency because the optimization step size is too small. We then recommend setting a reasonably large value for $\lambda$ because our SProj can constrain the optimization of $\mathcal{L}_{G}$.

\section{Conclusion}
In this paper, we propose a novel metric $SSD$ to better evaluate text-image consistency. 
Both theoretical analysis and empirical investigation show that  $SSD$ can indeed reflect the semantic consistency in text-to-image generation. We also design a novel framework called PDF-GAN along with two plug-and-play modules that can further enhance the text-image consistency. Experiments on benchmark datasets confirm the effectiveness of $SSD$ as well as the advantages of PDF-GAN both qualitatively and quantitatively.

\begin{figure*}[t]
\begin{minipage}{\textwidth}
 \centering
    \scriptsize
    \includegraphics[width=0.7\linewidth]{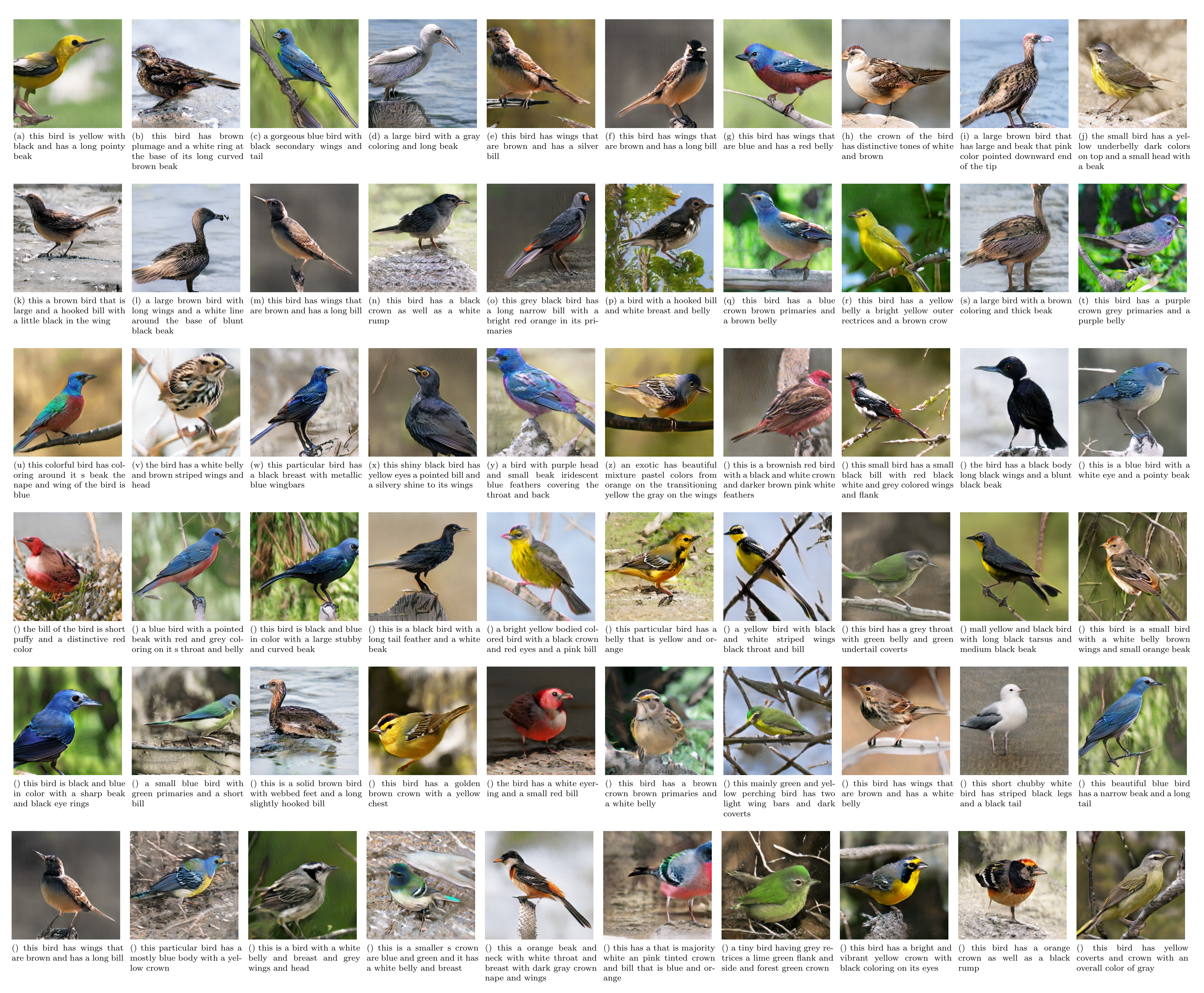}
    \caption{More examples produced by PDF-GAN on CUB.}
    \label{fig:bird_examples_all}
\end{minipage}
\hfill
\begin{minipage}{\textwidth}
  \centering
    \scriptsize
    \includegraphics[width=0.7\linewidth]{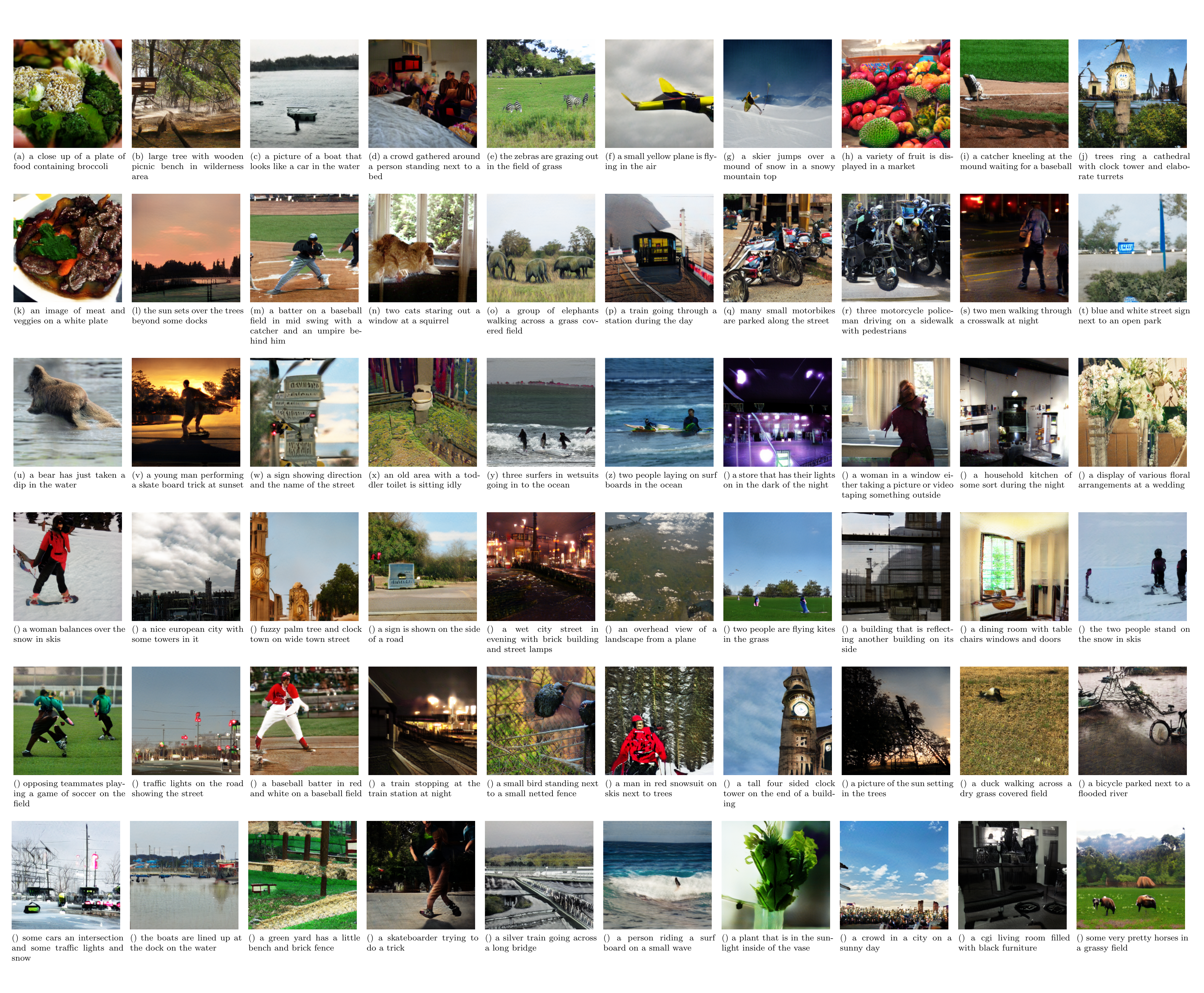}
    \caption{More examples produced by PDF-GAN on COCO.}
    \label{fig:coco_examples_all}
\end{minipage}
\end{figure*}


%
\newpage

\bibliographystyle{IEEEtran}
\bibliography{IEEEtran}

\end{document}